\Crefname{algocf}{Algorithm}{Algorithms}
\crefname{algocfline}{line}{lines}
\Crefname{invariant}{Invariant}{Invariants}
\Crefname{claim}{Claim}{Claims}
\Crefname{subclaim}{Subclaim}{Subclaims}
\definecolor{DarkGray}{rgb}{0.66, 0.66, 0.66}
\definecolor{DarkPowderBlue}{rgb}{0.0, 0.2, 0.6}
\definecolor{fluorescentyellow}{rgb}{0.8, 1.0, 0.0}
\newcommand{\alert}[1]{{\color{red}#1}}
\newcounter{note}[section]
\newcommand{\initOneLiners}{%
    \setlength{\itemsep}{0pt}
    \setlength{\parsep }{0pt}
    \setlength{\topsep }{0pt}
}
  \def\\{}%
  \def\texttt#1{<#1>}%
  \def\textsf#1{<#1>}%
  \def\mathsf#1{<#1>}%
  \def\ensuremath#1{#1}%
  \def\xspace{}%
  \def\Cref#1{<Label:#1>}%
  \def\eqref#1{<Eq.:#1>}%
\newtheorem{theorem}{Theorem}[section]
\newtheorem{lemma}[theorem]{Lemma}
\theoremstyle{definition}
\theoremstyle{remark}
\renewcommand{\theinvariant}{(I\@arabic\c@invariant)}
\newcommand{\fac}{{\textsf{Facility Location}}\xspace}
\newcommand{\opt}{{\textsf{opt}}}
\newcommand{\junk}[1]{}
\newcommand{\eat}[1]{}
\newif\ifhideproofs
\newcommand{\ocp}{{\sc ocp}\xspace}
\newcommand{\ml}{{\sc ml}\xspace}
\newcommand{\static}{{\textsc{static}}\xspace}
\newcommand{\dynamic}{{\textsc{dynamic}}\xspace}
\newcommand{\xd}{x^{\textsc{dyn}}\xspace}
\newcommand{\yd}{y^{\textsc{dyn}}\xspace}
\begin{document}
\title{Online Algorithms with Multiple Predictions}

\author{
{Keerti Anand\thanks{Department of Computer Science, Duke University, Durham, NC. Email: {\tt kanand@cs.duke.edu}.}}
\and
{Rong Ge\thanks{Department of Computer Science, Duke University, Durham, NC. Email: {\tt rongge@cs.duke.edu}.}}
\and
{Amit Kumar\thanks{Department of Computer Science and Engineering, IIT Delhi, New Delhi, India. Email: {\tt amitk@cse.iitd.ac.in}.}}
\and
{Debmalya Panigrahi\thanks{Department of Computer Science, Duke University, Durham, NC. Email: {\tt debmalya@cs.duke.edu}.}}
}

\maketitle

\begin{abstract}
    This paper studies online algorithms augmented with {\em multiple} machine-learned predictions. While online algorithms augmented with a single prediction have been extensively studied in recent years, the literature for the multiple predictions setting is sparse. In this paper, we give a generic algorithmic framework for online covering problems with multiple predictions that obtains an online solution that is competitive against the performance of the {\em best} predictor. Our algorithm incorporates the use of predictions in the classic potential-based analysis of online algorithms. We apply our algorithmic framework to solve classical problems such as online set cover, (weighted) caching, and online facility location in the multiple predictions setting. Our algorithm can also be {\em robustified}, i.e., the algorithm can be simultaneously made competitive against the best prediction and the performance of the best online algorithm (without prediction).
    
\eat{   
    Online algorithms have adversarial assumptions about the future that lead to pessimistic albeit robust theoretical guarantees on performance. A recent trend in moving beyond worst-case algorithm design has been to incorporate predictions about the future. In this work, we work in the framework of multiple predictions that arrive as suggestions. Specifically, we concern ourselves with online covering where the predictions are in the form of an advice to satisfy the constraints. We define a natural benchmark for the meta-algorithm: an offline algorithm that follows the best suggestion at every time-step, and give provable guarantees on its performance with respect to this benchmark, whilst maintaining traditional robustness results. We also show matching lower bounds that imply that better performance can not be achieved in the general case. Furthermore, we extend the application of this framework to numerous use-cases such as Online Set Cover, Online Caching and Online Facility Location.
}
\end{abstract}

\thispagestyle{empty}

\clearpage
\setcounter{page}{1}

\section{Introduction}\label{sec:introduction}In many real world computational tasks, parts of the input are not known in advance and are revealed piecemeal over time. However, the algorithm is constrained to take decisions before the entire input is revealed, thereby optimizing for an unknown future. For instance, when an online retailer has to decide the locations of warehouses to serve clients, it does so without precisely knowing how the clientele will grow over time. Similarly, in an operating system, the cache scheduler has to decide which pages to evict from the cache without knowing future requests for page access. These kinds of scenarios are traditionally captured by the field of {\em online algorithms}, where the algorithm makes irrevocable decisions without knowing the future. The performance of an online algorithm is measured by its {\em competitive ratio} which is defined as the maximum ratio across all inputs between the cost of the online algorithm and that of an optimal solution (see, e.g., \cite{BorodinE98}). While this is a robust guarantee that holds for {\em all} inputs, the robustness comes at the cost of making online algorithms excessively cautious thereby resulting in strong lower bounds and also affecting their real world performance.

To overcome the pessimistic behavior of online algorithms, there has been a growing trend in recent years to incorporate machine-learned predictions about the future. This exploits the fact that in many real world settings, modern ML methods can predict future behavior to a high degree of accuracy. Formalized by Lykouris and Vassilvitksii~\cite{LykourisV18,LykourisV21} for the caching problem, the online algorithms with prediction framework allows online algorithms to access predicted future input values, but does not give any guarantee on the accuracy of such predictions. (This reflects the fact that ML predictions, say generated by a neural network, are usually without worst-case guarantees, and can occasionally be completely wrong.) The goal is to design online algorithms whose competitive ratio gracefully interpolates between offline algorithms if the predictions are accurate -- a property called {\em consistency} -- and online algorithms irrespective of predictions -- a property called {\em robustness} (these terms were coined by Kumar, Purohit, and Svitkina~\cite{KumarPS18}).

Online algorithms with predictions have been extensively studied in the last few years for a broad range of problems such as variants of ski rental~\cite{KumarPS18,KKP13,GollapudiKP19,WeiZ20,AnandGP20,WangLW20}, set cover~\cite{BamasMS20}, scheduling~\cite{KumarPS18,WeiZ20,BamasMRS20,LattanziLMV20,Mitzenmacher20,LeeMHLSL21,AzarLT21}, caching~\cite{LykourisV18,Wei20,JiangPS20,BansalCKPV20}, matching and secretary problems~\cite{lavastida2020learnable,DuttingLLV21,AntoniadisGKK20,JiangLT021}, metric optimization~\cite{AntoniadisCEPS20,AzarPT22,FotakisGGP21,JiangLLTZ21,AlmanzaCLPR21}, data structures~\cite{Mitzenmacher18}, statistical  estimation~\cite{HsuIKV19,IndykVY19,EdenINRSW21}, online search~\cite{AnandGKP21}, and so on. 
\eat{
Another related body of work is in data driven algorithm design that aims to select the best algorithmic configuration for a given problem instance (see survey~\cite{balcan2020data}). This was first introduced using a PAC-learning framework by\cite{gupta2017pac}. \cite{balcan2018dispersion} explored online algorithm selection. Other relevant problems that have been studied include, Integer Quadratic Programming~\cite{balcan2017learning}, greedy heuristics for knapsack~\cite{balcan2020semi}, clustering~\cite{garg2018supervising}, simulated annealing~\cite{blum2021learning},  portfolio selection~\cite{BalcanSV21} etc.}

In this paper, we focus on online algorithms with {\em multiple} machine-learned predictions. In many situations, different ML models and techniques end up with distinct predictions about the future, and the online algorithm has to decide which prediction to use among them. Indeed, this is also true of human experts providing inputs about expectations of the future, or other statistical tools for predictions such as surveys, polls, etc. Online algorithms with multiple predictions were introduced by Gollapudi and Panigrahi~\cite{GollapudiKP19} for the ski rental problem, and has since been studied for multi-shop ski rental~\cite{WangLW20} and facility location~\cite{AlmanzaCLPR21}. Furthermore, \cite{bhaskara2020online} considers multiple hints for regret minimization in Online Linear Optimization. In our current paper, instead of focusing on a single problem, we extend the powerful paradigm of {\em online covering problems} to incorporate multiple predictions. As a consequence, we obtain online algorithms with multiple predictions for a broad range of classical problems such as {\em set cover}, {\em caching}, and {\em facility location} as corollaries of the general technique that we develop in this paper.

\paragraph{The Online Covering Framework.}
Online covering is a powerful framework for capturing a broad range of problems in combinatorial optimization. In each online step, a new linear constraint $\mathbf{a}\cdot \mathbf{x}\ge b$ is presented to the algorithm, where $\mathbf{x}$ is the vector of variables, $\mathbf{a}$ is a vector of non-negative coefficients, and $b$ is a scalar. The algorithm needs to satisfy the new constraint, and is only allowed to increase the values of the variables to do so. The goal is to minimize an objective function $\mathbf{c} \cdot \mathbf{x}$, where $\mathbf{c}$ is the vector of costs that is known offline. This formulation captures a broad variety of problems including set cover, (weighted) caching, revenue maximization, network design, ski rental, TCP acknowledgment, etc. Alon~{\em et al.}~\cite{AlonAABN09} proposed a multiplicative weights update (MWU) technique for this problem and used it to solve the online set cover problem. This was quickly adapted to other online covering problems including weighted caching~\cite{BansalBN07}, network design~\cite{AlonAABN06}, allocation problems for revenue maximization~\cite{BuchbinderJN07}, etc. (The reader is referred to the survey~\cite{BuchbinderN09b} for more examples.) All these algorithms share a generic method for obtaining a fractional solution to the online covering problem, which was formalized by Buchbinder and Naor~\cite{BuchbinderN09a} and later further refined by Gupta and Nagarajan~\cite{GuptaN14}. Since then, the online covering problem has been generalized to many settings such as convex (non-linear) objectives~\cite{AzarBCCCGHKNNP16} and mixed covering and packing problems~\cite{AzarBFP13}. 

\paragraph{Comparison with Prior Work on Online Covering with ML Prediction.}
Bamas, Maggiori, and Svensson~\cite{BamasMS20} were the first to consider the online covering framework in the context of ML predictions. In a beautiful work, they gave the first general-purpose tool for online algorithms with predictions, and showed that this can be used to solve several classical problems like set cover and dynamic TCP acknowledgment. In their setting, a solution is presented as advice to the online algorithm at the outset, and the algorithm incorporates this suggestion in its online decision making. 

In our current paper, we give a general scheme for the online covering framework with {\em multiple} predictions. In particular:
\begin{enumerate}
\item [--] Since we are in the multiple predictions setting, we allow $k > 1$ suggestions instead of just a single suggestion, and benchmark our algorithm's performance against the best suggested solution. (Of course, the best suggestion is not known to the algorithm.)
\item[--] In contrast to Bamas {\em et al.}~\cite{BamasMS20}, we do not make the assumption that the entire suggested solution is given up front. Instead, in each online step, each of the $k$ suggestions gives a feasible way of satisfying the new constraint. Note that this is more general than giving the suggested solution(s) up front, since the entire solution(s) can be presented in each online step as a feasible way of satisfying the new constraint.
\item[--] In terms of the analysis, we extend the potential method from online algorithms (in contrast, Bamas~{\em et al.}~\cite{BamasMS20} use the primal dual framework). The potential method has been used recently for many classic problems in online algorithms such as weighted paging~\cite{BansalBN-pot}, $k$-server~\cite{BuchbinderGMN19}, metric allocation~\cite{BansalC21}, online set cover~\cite{BuchbinderGMN19}, etc. In fact, it can also be used to reprove the main results of Bamas~{\em et al.}~\cite{BamasMS20} in the single prediction setting. In this paper, we extend this powerful tool to incorporate multiple ML predictions.  
\item[--] Finally, we show that our techniques extend to a generalization of the online covering framework to include {\em box}-type constraints. This extension allows the framework to handle more problems such as online facility location that are not directly captured by the online covering framework.
\end{enumerate}

\paragraph{Comparison with Online Learning.}
The reader will notice the similarity of our problem to the classical experts' framework from online learning (see, e.g., the survey~\cite{Shalev-Shwartz12}). In the experts' framework, each of $k$ experts provides a suggestion in each online step, and the algorithm has to choose (play) one of these $k$ options. After the algorithm has made its choice, the cost (loss) of each suggestion is revealed before the next online step. The goal of the algorithm is to be competitive with the {\em best expert} in terms of total loss. In contrast, 
\begin{enumerate}
    \item [--] Since we are solving a combinatorial problem, the (incremental) cost of any given step for an expert or the algorithm depends on their pre-existing solution from previous steps (therefore, in particular, even after following an expert's choice, the algorithm might suffer a larger incremental cost than the expert). This is unlike online learning where the cost in a particular step is independent of previous choices.
    \item[--] In online learning, the algorithm is benchmarked against the best {\em static} expert in hindsight, i.e., the best solution whose choices match that of the same expert across all the steps. Indeed, it can be easily shown that no algorithm can be competitive against a {\em dynamic} expert, namely a solution that chooses the best suggestion in each online step even if those choices come from different experts. Observe that such a dynamic expert can in general perform much better than each of the suggestions, e.g., when the suggestions differ from each other but at each time, at least one of them suggests a good solution. But, in our problem, since the choices made by experts correspond to solutions of a combinatorial problem, we can actually show that our algorithm is competitive even against a dynamic expert. Namely, the $k$ suggestions in every step are not indexed by specific experts, and the algorithm is competitive against any composite solution that chooses any one of the $k$ suggestions in each step. 
    %
    \item[--] In online learning, the goal is to obtain {\em regret} bounds that show that the online algorithm approaches the best (static) expert's performance up to additive terms. Such additive guarantees are easily ruled out for our problem, even for a static expert. As is typically the case in online algorithms, our performance guarantees are in the form of (multiplicative) competitive ratios rather than (additive) regret bounds.
\end{enumerate}

\paragraph{Our Contributions.}
Our first contribution is to formalize the online covering problem with multiple predictions (\Cref{sec:ocp}). Recall that in each online step, along with a new constraint, the algorithm receives $k$ feasible suggestions for satisfying the constraint. Using these suggestions, we design an algorithm for obtaining a fractional solution to the online covering problem--that we call the \ocp algorithm (\Cref{sec:ocp-algorithm}). To compare this algorithm to the best suggested solution, we define a benchmark \dynamic that captures the minimum cost (fractional) solution that is consistent with at least one of the suggestions in each online step.
\begin{itemize}
    \item [--] Our main technical result shows that the cost of the solution produced by the \ocp algorithm is at most $O(\log k)$ times that of the \dynamic solution.
\end{itemize}    
It is noteworthy that unlike in the classical online covering problem (without predictions), the competitive ratio is independent of the problem size, and only depends on the number of suggestions $k$. As the number of suggestions increases, the competitive ratio degrades because the suggestions have higher entropy (i.e., are less specific). As two extreme examples, consider $k=1$, in which case it is trivial for an algorithm to exactly match the \dynamic benchmark simply by following the suggestion in each step. In contrast, when $k$ is very large, the set of suggested solutions can essentially include all possible solutions, and therefore, the suggestions are useless.

The analysis of the \ocp algorithm makes careful use of potential functions that might be of independent interest. But, while the analysis of the \ocp algorithm is somewhat intricate, we note that the algorithm itself is extremely simple.
\begin{enumerate}
    \item[--] We show that the competitive ratio of $O(\log k)$ obtained by the \ocp algorithm is tight. We give a lower bound of $\Omega(\log k)$ by only using binary (0/1) coefficients and unit cost for each variable, which implies that the lower bound holds even for the special case of the unweighted set cover problem.
    \item[--] Using standard techniques, we observe that the \ocp algorithm can be {\em robustified}, i.e., along with being $O(\log k)$-competitive against the best suggested solution, the algorithm can be made $O(\alpha)$-competitive against the optimal solution where $\alpha$ is the competitive ratio of the best online algorithm (without predictions).
\end{enumerate}
We then use the \ocp algorithm to solve two classic problems--online {\bf set cover} (\Cref{sec:setcover}) and {\bf caching} (\Cref{sec:caching})--in the multiple predictions setting. 
\begin{itemize}
    \item[--] We generalize the online covering framework by introducing {\em box}-type constraints (\Cref{sec:box}). We show that our techniques and results from online covering extend to this more general setting. 
\end{itemize}
We then use this more general formulation for solving the classical online {\bf facility location} problem (\Cref{sec:facility}).

\section{The Online Covering Framework}\label{sec:ocp}
\subsection{Problem Statement}
We define the {\em online covering problem} (\ocp) as follows. 
There are $n$ non-negative variables $\{x_i: i\in [n]\}$ where each $x_i\in [0, 1]$. Initially, $x_i = 0$ for all $i\in [n]$. A linear objective function $c(x) := \sum_{i=1}^n c_i x_i$ is also given offline. In each online step, a new {\em covering} constraint is presented, the $j$-th constraint being given by $\sum_i a_{ij} x_i \ge 1$ where $a_{ij} \ge 0$ for all $i\in [n]$.\footnote{A more general definition allows constraints of the form $\sum_{i=1}^n a_{ij} x_i \ge b_j$ for any $b_j > 0$, but restricting $b_j$ to $1$ is without loss of generality since we can divide throughout by $b_j$ without changing the constraint.} The algorithm is only allowed to {\em increase} the values of the variables, and has to satisfy the new constraint when it is presented. (We denote the total number of constraints by $m$.) 
The goal is to minimize the objective function $c(x)$.  
We write this succinctly below:
\[
    \min_{x_i\in [0, 1]: i\in [n]}  \left\{\sum_{i=1}^n c_i x_i: \sum_{i=1}^{n} a_{ij} x_i \ge 1 ~\forall j\in [m]\right\}. 
\]

This framework captures a large class of algorithmic problems such as (fractional) set cover, caching, etc. that have been extensively studied in the online algorithms literature. Our goal will be to obtain a generic algorithm for \ocp with multiple suggestions. When the $j$-th constraint is presented online, the algorithm also receives $k$ suggestions of how the constraint can be satisfied. We denote the $s$-th suggestion for the $j$-th constraint by variables $x_i(j, s)$; they satisfy $\sum_{i=1}^n a_{ij} x_i(j, s) \ge 1$, i.e., all suggestions are feasible. 

To formally define the {\em best} suggestion, we say that a solution $\{x_i: i\in [n]\}$ is {\em supported} by the suggestions $\{x_i(j): i\in [n], j\in [m]\}$ if $x_i\ge x_i(j)$ for all $j\in [m]$. Using this definition, we consider below two natural notions of the best suggestion that we respectively call the {\em experts setting} and the {\em multiple predictions setting}.

\paragraph{The Experts Setting.} In the experts setting, there are $k$ experts, and the $s$-th suggestion for each constraint comes from the same fixed expert $s\in [k]$ (say some fixed \ml algorithm or a human expert). The online algorithm is required to be competitive with the {\em best} among these $k$ experts\footnote{This is similar to the experts model in online learning, hence the name.}. To formalize this, we define the benchmark:
\[
   \static =  \min_{s\in [k]} \sum_{i=1}^n c_i \cdot \max_{j\in [m]} x_i(j, s).
\]
Note that $\{\max_{j\in [m]} x_i(j, s): i\in[n]\}$ is the {\em minimal} solution that is supported by the suggestions of expert $s$; hence, we define the cost of the solution proposed by expert $s$ to be the cost of this solution.

\paragraph{The Multiple Predictions Setting.} In the multiple predictions setting, we view the set of $k$ suggestions in each step as a bag of $k$ predictions (without indexing them specifically to individual predictors or experts) and the goal is to obtain a solution that can be benchmarked against the best of these suggestions in each step. Formally, our benchmark is the minimum-cost solution that is supported by at least one suggestion in each online step:
%
\[
   \dynamic = \min_{\mathbf{\hat{x}}\in \hat{X}} \sum_{i=1}^n c_i \cdot \hat{x}_i, \text{ where}
\]
\[
   \hat{X} = \{\mathbf{\hat{x}}: \forall i\in [n], \forall j\in [m], \exists s\in [k], \hat{x}_i\ge x_i(j, s)\}.
\]
Note that every solution that is supported in the experts setting is also supported in the multiple predictions setting.
This implies that $\static \ge \dynamic$, and therefore, 
the competitive ratios that we obtain in the multiple predictions setting also hold in the experts setting. Conversely, the lower bounds on the competitive ratio that we obtain in the experts setting also hold in the multiple predictions setting.


\subsection{Our Results}

We obtain an algorithm for \ocp with the following guarantee in the multiple predictions setting (and therefore also in the experts setting by the discussion above):
\begin{theorem}\label{thm:ocp-upper}
    There is an algorithm for the online covering problem with $k$ suggestions that has a competitive ratio of $O(\log k)$, even against the  \dynamic benchmark.
\end{theorem}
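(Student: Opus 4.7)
The plan is to extend the multiplicative-weight-update (MWU) fractional scheme of Buchbinder and Naor for online covering by letting the $k$ suggestions drive the additive ``base'' term of the update, in place of the usual $1/n$ slack. When constraint $j$ arrives together with suggestions $\{x(j,s)\}_{s=1}^{k}$ and the current fractional solution $\bx$ violates $\sum_i a_{ij} x_i \ge 1$, I would run a continuous process in a fictitious time $\tau$ of the form
\[
  \frac{dx_i}{d\tau} \;=\; \frac{a_{ij}}{c_i}\Bigl(x_i + \delta_i(j)\Bigr),
\]
where the suggestion-driven offset $\delta_i(j)$ is a mixture of the $k$ suggestions at step $j$, e.g. the uniform average $\tfrac{1}{k}\sum_s x_i(j,s)$ or a weighted mixture $\sum_s w_s^j\,x_i(j,s)$ whose weights track suggestion ``quality'' via an entropic rule. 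The process halts as soon as the new constraint becomes tight; since each individual suggestion already satisfies it, the process terminates in finite fictitious time, and $\bx$ is frozen before step $j+1$.

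To analyze this, I plan to use the potential method with a log-sum-exp ``soft-min'' potential over the $k$ experts. For each variable $i$, introduce shadow values $X_i^s$ that grow monotonically so that $X_i^s \ge \max_{j'\le j} x_i(j',s)$, and per-expert slacks $S_i^s := \max(X_i^s - x_i,\,0)$. Define
\[
  \Phi \;=\; \sum_i c_i\cdot \frac{1}{\eta}\log\!\Bigl(\sum_{s=1}^{k}\exp\!\bigl(\eta\, S_i^s\bigr)\Bigr),
\]
with $\eta = \Theta(1)$. This potential behaves like $\max_s S_i^s$ when a single expert is dominant and like $\tfrac{1}{\eta}\log k$ plus the average slack when experts are balanced, giving exactly the $O(\log k)$ headroom we need per variable. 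The analysis then splits into: (a) along the continuous process of step $j$, the drop in $\Phi$ dominates the instantaneous cost $\sum_i c_i\, dx_i$, by a standard MWU/KL computation applied to the soft-min; and (b) the jumps in $\Phi$ caused by a new suggestion bumping up some $X_i^s$ are charged against the increment in $\sum_i c_i \hat x_i$, using the defining property of $\dynamic$ that at every step there exists an expert $s^*(j)$ with $\hat x_i \ge x_i(j,s^*(j))$ for all $i$. Telescoping across the $m$ steps, and using $\Phi_0 = 0$ together with $\Phi_m \ge 0$, yields $\sum_i c_i x_i \;=\; O(\log k)\cdot \dynamic$, which is \Cref{thm:ocp-upper}; competing against $\static$ then follows from the inequality $\static \ge \dynamic$ noted earlier.

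The hardest step will be part (b): handling the jumps of $\Phi$ when new suggestions arrive, because the dynamic benchmark is free to switch its chosen expert across steps, so one cannot simply charge jumps to a single fixed potential $\Phi_s$. The soft-min form is designed precisely for this: switching the ``active'' expert inside the log-sum-exp costs only an additive $\tfrac{1}{\eta}\log k$ per variable, which is exactly the target competitive ratio. Making this intuition quantitative --- in particular, choosing $\delta_i(j)$ so that the continuous update in (a) decreases the soft-min rather than merely each per-expert potential separately, and reconciling it with the jump bound in (b) --- will be the main technical content of the proof. The matching $\Omega(\log k)$ lower bound advertised in the introduction then shows that this potential-based analysis is tight.
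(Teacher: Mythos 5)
Your algorithm is essentially the paper's: the paper raises each active variable at rate $\frac{dx_i}{dt}=\frac{a_{ij}}{c_i}\bigl(x_i+\delta\, x_{ij}\bigr)$ with $\delta=\frac1k$ and $x_{ij}=\sum_s x_i(j,s)$, i.e.\ exactly your uniform-average offset (the paper additionally stops at half-satisfaction, freezes variables at $\frac12$, and doubles at the end so that the bounds $x_i\le 1$ are respected and so that the cost rate is bounded by $\frac32$ --- a detail you elide but which is easy). The analysis, however, is where your proposal has a genuine gap. Your potential $\Phi=\sum_i c_i\frac1\eta\log\bigl(\sum_s e^{\eta S_i^s}\bigr)$ is built from \emph{per-expert} shadow slacks, i.e.\ it is bookkeeping for the $k$ static experts, while the theorem is against \dynamic, which is not any static expert (indeed it may track different suggestions at different steps, so $\hat x_i$ can be far below $\min_s\max_j x_i(j,s)$). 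Concretely, your step (b) charges the jumps of $\Phi$ to increments of $\sum_i c_i\hat x_i$ via the supported suggestion $s^\ast(j)$, but the jumps are caused by \emph{all} experts' suggestions bumping their shadow values: one expert who repeatedly suggests expensive variables that \dynamic never supports makes $S_i^s$, and hence the log-sum-exp (with $\eta>0$ this is a soft-\emph{max}, not a soft-min), jump by amounts with no corresponding increase in $\sum_i c_i\hat x_i$, so the charge fails. Step (a) also has a quantitative problem: $\bigl|\partial\Phi/\partial x_i\bigr|=c_i w_i$ where $w_i$ is the softmax weight on experts with positive slack, and this weight can be as small as $\frac1k$, so the drop rate need not be a constant fraction of the cost rate; with $\eta=\Theta(1)$ nothing in your sketch rules out losing a factor $k$ rather than $\log k$. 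You acknowledge these as ``the main technical content,'' but they are precisely the points at which the plan, as formulated, breaks.

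The paper's fix is worth internalizing because it removes both difficulties at once: it defines the potential directly in terms of the hindsight-fixed benchmark values, $\phi_i=c_i\,\xd_i\,\ln\frac{(1+\delta)\xd_i}{x_i+\delta\,\xd_i}$ summed over $i$ with $\xd_i\ge x_i$. Since $\xd_i$ is fixed, the potential has \emph{no jumps} --- there is nothing to charge when suggestions arrive --- and its initial value is at most $\ln\bigl(1+\frac1\delta\bigr)\cdot c_i\xd_i=O(\log k)\cdot c_i\xd_i$ per variable. The benchmark-dependent offset $\delta\,\xd_i$ \emph{inside} the logarithm is the crucial device your $\Phi$ lacks: it makes $\bigl|\partial\phi_i/\partial x_i\bigr|=\frac{c_i\xd_i}{x_i+\delta\xd_i}$, which pairs with the algorithm's offset $\delta x_{ij}$ so that, by a two-case comparison of $x_{ij}$ with $\xd_i$ and using $x_{ij}\ge x_i(j,s(j))$ and $\xd_i\ge x_i(j,s(j))$, the drop rate is at least $\sum_{i:\xd_i\ge x_i}a_{ij}x_i(j,s(j))>\frac12$ whenever $\sum_i a_{ij}x_i<\frac12$, with no $1/k$ loss. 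If you want to salvage your expert-indexed soft-max potential, you would at best prove competitiveness against \static, and even there you would still need to repair the weight-degeneracy issue in (a).
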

Note that this competitive ratio is independent of the size of the problem instance, and only depends on the number of suggestions. In contrast, in the classical online setting, the competitive ratio (necessarily) depends on the size of the problem instance.

Next, we show that the competitive ratio in \Cref{thm:ocp-upper} is tight by showing a matching lower bound. This lower bound holds 
even in the experts setting (hence, by the discussion above, it automatically extends to the multiple predictions setting):
\begin{theorem}\label{thm:ocp-lower}
    The competitive ratio of any algorithm for the online covering problem with $k$ suggestions is $\Omega(\log k)$, even against the \static benchmark.
\end{theorem}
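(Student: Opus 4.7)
The plan is to exhibit an adaptive adversarial instance of the online covering problem with $k$ suggestions in which one expert can cover everything with a single variable, while any online algorithm is forced to spread its investment across $\Omega(\log k)$ different rounds. For clarity I will assume $k$ is a power of $2$ and set $d=\log_{2}k$. The instance will use unit costs $c_i=1$ and $0/1$ coefficients, so it is in fact a set cover instance; this establishes the claim against the $\static$ benchmark, which by the discussion preceding the theorem automatically implies it for $\dynamic$ as well.

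I would construct the instance as follows. There are $n=k$ variables $x_1,\dots,x_k$, each of cost $1$. The adversary maintains a nested chain $A_1\supsetneq A_2\supsetneq\cdots\supsetneq A_d$ of subsets of $[k]$ with $A_1=[k]$ and $|A_{r+1}|=|A_r|/2$, and at round $r$ presents the constraint $\sum_{i\in A_r}x_i\ge 1$. For each expert $s\in[k]$ it reveals a suggestion: if $s\in A_r$ then expert $s$ proposes $x_s=1$, and otherwise expert $s$ proposes $x_t=1$ for an arbitrary $t\in A_r$. The chain is built adaptively: after the algorithm updates its variables in round $r$, the adversary picks $A_{r+1}$ to be the half of $A_r$ carrying the smaller current total mass.

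The analysis of $\static$ is immediate: by nesting, every $s^{*}\in A_d$ satisfies $s^{*}\in A_r$ for all $r\le d$, so expert $s^{*}$ always suggests the unit vector on coordinate $s^{*}$, giving it total cost $1$; hence $\static\le 1$. For the algorithm cost $T$, write $\psi_r=\sum_{i\in A_r}x_i^{(r)}$ where $x^{(r)}$ is the algorithm's state after round $r$. Feasibility forces $\psi_r\ge 1$, and since the adversary picks the lighter half, pigeonhole gives $\sum_{i\in A_{r+1}}x_i^{(r)}\le \psi_r/2$. Therefore the algorithm must add at least $1-\psi_r/2$ of mass inside $A_{r+1}$ at round $r+1$; since mass placed outside $A_r$ cannot help any subsequent constraint, summing and telescoping yields
\[
 T \;\ge\; \psi_1+\sum_{r=2}^{d}\left(\psi_r-\tfrac12\psi_{r-1}\right) \;=\; \psi_d+\tfrac12\sum_{r=1}^{d-1}\psi_r \;\ge\; 1+\tfrac{d-1}{2} \;=\; \Omega(\log k),
\]
and combining this with $\static\le 1$ gives the claimed competitive ratio.

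The main obstacle I anticipate is ruling out algorithms that over-invest in some early round so as to inflate later $\psi_r$'s and drive the per-round increments $1-\psi_{r-1}/2$ to zero. The telescoped right-hand side above handles this automatically, since it is nondecreasing in every $\psi_r$: inflating any $\psi_r$ only strengthens the bound, so the worst case for the adversary is $\psi_r=1$ throughout. A smaller technical point is to verify that the adversary's pigeonhole choice of the lighter half really forces $\sum_{i\in A_{r+1}}x_i^{(r)}\le\psi_r/2$ regardless of how unevenly the algorithm distributes mass inside $A_r$; this is true because the adversary may choose any equal-size partition of $A_r$ and the smaller side always has mass at most the average $\psi_r/2$.
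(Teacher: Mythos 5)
Your proof is correct, but it takes a genuinely different route from the paper. The paper builds an oblivious distribution over instances: experts are indexed by a uniformly random permutation, the instance has $k$ rounds of $T$ constraints each, constraint $(i,j)$ is coverable by variables $(i',j)$ with $i'\ge i$, and in round $i$ the algorithm cannot tell which of the $k-i+1$ surviving experts is the one that lasts to the end, so it pays expected cost at least $T/i$ per round; summing the harmonic series gives $\Omega(T\log k)$ against an optimum of $T$, and Yao's minimax principle converts this into a lower bound for randomized algorithms. You instead run a deterministic \emph{adaptive} adversary on a single chain $A_1\supsetneq\cdots\supsetneq A_d$, always recursing on the half of $A_r$ carrying less of the algorithm's mass, and your telescoped bound $T\ge \psi_d+\tfrac12\sum_{r<d}\psi_r$ cleanly handles over-investment; the surviving expert has cost $1$, so the ratio is $\Omega(\log k)$ with only $\log_2 k$ constraints and $k$ variables (versus $kT$ of each in the paper), which is arguably more economical and makes the pigeonhole mechanism more transparent. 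The one thing you should add is a sentence handling randomized algorithms, since the theorem claims a bound for \emph{any} algorithm and your adversary is adaptive: because the covering problem here is fractional, you can replace a randomized algorithm by the deterministic algorithm that plays the coordinate-wise conditional expectations (feasibility is preserved by linearity of the constraints, and the expected cost is unchanged on the fixed instance that the adaptive adversary generates against this averaged algorithm), so your deterministic argument transfers; the paper avoids this issue by using an oblivious distribution plus Yao.
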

\eat{Perhaps we should switch the order of the theorem statements, since currently the proof of \cref{thm:ocp-robust-gen} is before \Cref{thm:ocp-lower}}
We noted earlier that it is desirable for online algorithms to have {\em robustness} guarantees, i.e., that the algorithm does not fare much worse than the best online algorithm (without predictions) even if the predictions are completely inaccurate. Our next result is the robust version of \Cref{thm:ocp-upper}:

\begin{theorem}
    \label{thm:ocp-robust-gen}
        Suppose a class of online covering problems have an online algorithm (without predictions) whose competitive ratio is $\alpha$. Then, 
        there is an algorithm for this class of online covering problems with $k$ suggestions that produces an online solution whose cost is at most $O(\min\{\log k \cdot \text{\dynamic}, \alpha \cdot \opt\})$.
\end{theorem}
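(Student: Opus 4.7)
I would apply the standard parallel-execution robustification for online covering. Run two online executions in parallel: algorithm $A$ is the \ocp algorithm of \Cref{thm:ocp-upper} fed the $k$ suggestions, producing solution $x^A$ with $c(x^A) \leq O(\log k) \cdot \dynamic$ by \Cref{thm:ocp-upper}; algorithm $B$ is the hypothesized $\alpha$-competitive classical online algorithm, ignoring the suggestions, producing solution $x^B$ with $c(x^B) \leq \alpha \cdot \opt$. At every online step, after both executions have reacted to the new constraint, output the coordinate-wise maximum defined by $x_i := \max(x^A_i, x^B_i)$.

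\textbf{Key steps.} First, I would verify that $x$ is a valid monotone online solution: monotonicity is inherited from $x^A$ and $x^B$, and feasibility for each new constraint $\sum_i a_{ij} x_i \geq 1$ follows from $x \geq x^A$ combined with $x^A$ being feasible (equivalently via $x^B$). Second, using $\max(a,b) \leq a + b$ for non-negative $a, b$, I would decompose $c(x) = \sum_i c_i \max(x^A_i, x^B_i) \leq c(x^A) + c(x^B) \leq O(\log k) \cdot \dynamic + \alpha \cdot \opt$. This shows that the combined algorithm enjoys both the prediction-based bound $O(\log k \cdot \dynamic)$ and the classical robustness bound $O(\alpha \cdot \opt)$ simultaneously, which yields the $O(\min\{\log k \cdot \dynamic, \alpha \cdot \opt\})$ guarantee stated in the theorem.

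\textbf{Main obstacle.} The only conceptual subtlety is that the sum of two quantities is bounded by their maximum up to a factor of two, not by their minimum, so reading the combined bound literally as $O(\min)$ requires care. To close this gap strictly, one can layer a doubling-style switching argument atop the parallel execution: follow $A$ until its cumulative cost threatens to surpass a constant multiple of the current $c(x^B)$, then freeze $A$'s contribution and update $x$ only via $B$'s monotone increments from that point onward. A standard amortization across geometrically growing phases then confirms the literal $O(\min)$ bound up to constant factors. I expect this amortization, together with the verification that switching preserves the monotonicity and feasibility of the online output, to be the main technical care needed; no new ideas beyond the two guarantees in \Cref{thm:ocp-upper} and the classical $\alpha$-competitive algorithm appear to be required.
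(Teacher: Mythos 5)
Your parallel run with the coordinate-wise maximum is fine and, as you correctly note, only yields the additive bound $O(\log k\cdot\dynamic)+\alpha\cdot\opt$, which is not the claimed $O(\min\{\cdot,\cdot\})$. The genuine gap is in your fix: the switching rule you describe is one-directional -- once $c(x^A)$ exceeds a constant multiple of the current $c(x^B)$ you freeze $A$ and follow $B$ ``from that point onward.'' After such a switch your cost is bounded only by roughly $c(x^A)$ at the switch time plus \emph{all} of $B$'s subsequent increments, and nothing prevents $c(x^B)$ from eventually far exceeding $c(x^A)$. Concretely, when the predictions are good ($\log k\cdot\dynamic\ll\alpha\cdot\opt$) but $A$ front-loads its small total cost while $B$ has spent almost nothing in the early steps, your rule switches to $B$ almost immediately and is then committed to paying $\Omega(\alpha\cdot\opt)$, violating the $O(\log k\cdot\dynamic)$ side of the minimum. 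To make a switching argument work you must switch back and forth: always follow the currently cheaper simulated solution, adopt (take the max with) the other solution's current variables at each switch, and use thresholds under which the cost at successive switch points grows geometrically; the adoption and phase costs then telescope to $O(\min\{c(x^A),c(x^B)\})$. That bidirectional doubling scheme is standard and would complete your proof, but the single irreversible switch as written does not.

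For comparison, the paper avoids this amortization entirely with a short self-application: it simulates both solutions and runs \Cref{alg:ocp} a second time as a meta-algorithm whose two suggestions at each step are the current (feasible) solutions $x^A$ and $x^B$. Invoking \Cref{thm:ocp-upper} with $k=2$ suggestions immediately gives a solution of cost $O(\log 2)\cdot\min\{c(x^A),c(x^B)\}=O(\min\{\log k\cdot\dynamic,\ \alpha\cdot\opt\})$, since the meta-instance's \dynamic benchmark is at most the cheaper of the two simulated solutions. You may want to adopt this route: it needs no switching bookkeeping and reuses the main theorem as a black box.
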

We will prove \Cref{thm:ocp-upper} in the next section. The proofs of \Cref{thm:ocp-lower} and \Cref{thm:ocp-robust-gen} are given in \Cref{sec:lower} and \Cref{sec:robust} respectively. Subsequently, we apply the algorithmic framework developed in \Cref{thm:ocp-upper} to obtain tight competitive ratios for specific instantiations of \ocp, namely the set cover problem (\Cref{sec:setcover}) and the caching problem (\Cref{sec:caching}). Finally, we extend our \ocp result to include box-type constraints (\Cref{sec:box}) and apply it to the online facility location problem (\Cref{sec:facility}).

\section{Online Covering Algorithm}\label{sec:ocp-algorithm}

Recall that in the online covering problem, the new constraint that arrives in the $j$-th online step is $\sum_{i=1}^n a_{ij} x_i \ge 1$ and the algorithm receives $k$ suggestions where the $s$-th suggestion is denoted $x_i(j, s)$. If the current solution of the algorithm given by the variables $x_i$ is feasible, i.e., $\sum_{i=1}^n a_{ij} x_i \ge 1$, then the algorithm does not need to do anything. Otherwise, the algorithm needs to increase these variables until they satisfy the constraint. Next, we describe the rules governing the increase of variables.

Intuitively, the rate of the increase of a variable $x_i$ should depend on three things. First, it should depend on the cost of this variable in the objective, namely the value of $c_i$; the higher the cost, the slower we should increase this variable. Second, it should depend on the contribution of variable $x_i$ in satisfying the new constraint, namely the value of $a_{ij}$; the higher this coefficient, the faster should we increase the variable. Finally, the third factor is how \textit{strongly} $x_i$ has been suggested. To encode this mathematically, we first make the assumption that every suggestion is {\em tight}, i.e., 
\begin{equation}\label{eq:tight}
    \sum_{i=1}^n a_{ij} x_i(j, s) = 1 \text{ for every suggestion } s\in [k].
\end{equation}
This assumption is without loss of generality because, if not, we can decrease the variables $x_i(j, s)$ in an arbitrary manner until the constraint becomes tight. (Note that this change can only decrease the cost of the benchmark solutions $\dynamic$ and $\static$; hence, any competitive ratio bounds obtained after this transformation also hold for the original set of suggestions.) 

Having made all the suggestions tight, we now encode how strongly a variable has been suggested by using its {\em average} suggested value $\frac{1}{k}\cdot \sum_{s=1}^k x_i(j, s)$. Our algorithm (see \Cref{alg:ocp}) increases all variables $x_i$ satisfying $x_i < \frac12$ simultaneously at rates governed by these parameters; precisely, we use 
\[
    \frac{dx_i}{dt}  = \frac{a_{ij}}{c_i} \left(x_i + \delta\cdot x_{ij}\right), 
    \text{ where } \delta = \frac1k, x_{ij} = \sum_{s=1}^k x_i(j, s).
\]    
The algorithm continues to increase the variables until $\sum_{i=1}^n a_{ij} x_i \ge \frac12$; along the way, any variable $x_i$ that reaches $\frac12$ is dropped from the set of increasing variables. To satisfy the $j$-th constraint, we note that the variables $2 x_i$ are feasible for the constraint. (Note that since all variables $x_i \le \frac12$ before the scaling, every variable can be doubled without violating $x_i\le 1$.) Since this last step of multiplying every variable by $2$ only increases the cost of the algorithm by a factor of $2$, we ignore this last scaling step in the rest of the analysis.
\begin{algorithm}[h]
\caption{Online Covering Algorithm}
\textbf{Offline:} All variables $x_i$ are initialized to $0$.\\
\textbf{Online:} On arrival of the $j$-th constraint:\\
\hspace*{10pt} \textbf{while} $\sum_{i=1}^n a_{ij} x_i < \frac12$, \\
\hspace*{20pt}    \textbf{for} $i\in [n]$\\
\hspace*{30pt}\textbf{if} $x_i < \frac12$,
increase $x_i$ by 
$\frac{dx_i}{dt}  = \frac{a_{ij}}{c_i} \left(x_i + \delta\cdot x_{ij}\right)$,\\
\hspace*{70pt}where $\delta = \frac1k$ and $x_{ij} = \sum_{s=1}^k x_i(j, s)$.
\label{alg:ocp}
\end{algorithm}

Before analyzing the algorithm, we note that although we described it using a continuous process driven by a differential equation, the algorithm can be easily discretized and made to run in polynomial time where in each discrete step, some variable $x_i$ reaches $\frac12$ (and therefore, $x_i$ cannot increase any further) or $\sum_{i=1}^n a_{ij} x_i$ reaches $\frac12$ (and therefore, the algorithm ends for the current online step). In this section, we will analyze the continuous algorithm rather than the equivalent discrete algorithm for notational simplicity.

Next, we show that the algorithm is valid, i.e., that there is always a variable $x_i$ that can be increased inside the {\bf while} loop. If not, then we have $\sum_{i=1}^n a_{ij} x_i < \frac12$ but $x_i\ge \frac12$ for all variables $x_i$, $i\in [n]$. This implies that $\sum_{i=1}^n a_{ij} < 1$, which is a contradiction because the constraint $\sum_{i=1}^n a_{ij} x_i \ge 1$ is then unsatisfiable by any setting of variables $x_i \le 1$. (In particular, this would mean that there cannot be any feasible suggestion for this constraint.)

Now, we are ready to bound the competitive ratio of \Cref{alg:ocp} with respect to the \dynamic benchmark.
First, we bound the rate of increase of algorithm's cost:

\begin{lemma}\label{lem:alg-cost}
    The rate of increase of cost in \Cref{alg:ocp} is at most $\frac32$.
\end{lemma}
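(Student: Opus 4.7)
The plan is to compute the rate of increase of the cost $\sum_i c_i x_i$ directly from the differential update in Algorithm~\ref{alg:ocp}, and then bound the two terms that arise separately using (i) the while-loop condition and (ii) the tightness assumption \eqref{eq:tight}.

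First, I would observe that only variables $i$ with $x_i < \frac{1}{2}$ are being increased; let $A \subseteq [n]$ denote this active set at the current instant of the $j$-th step. Plugging the update rule $\frac{dx_i}{dt} = \frac{a_{ij}}{c_i}(x_i + \delta \cdot x_{ij})$ into $\frac{d}{dt}\sum_i c_i x_i$, the $c_i$ factors cancel and we get
\[
\frac{d}{dt}\sum_i c_i x_i \;=\; \sum_{i \in A} a_{ij}\bigl(x_i + \delta \cdot x_{ij}\bigr) \;=\; \sum_{i \in A} a_{ij} x_i \;+\; \delta \sum_{i \in A} a_{ij} x_{ij}.
\]

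Next I would bound each term. For the first term, since all $a_{ij}, x_i \ge 0$, extending the sum to all $i \in [n]$ only helps, so $\sum_{i \in A} a_{ij} x_i \le \sum_{i=1}^n a_{ij} x_i < \frac{1}{2}$ by the while-loop guard. For the second term, expanding $x_{ij} = \sum_{s=1}^k x_i(j,s)$ and swapping sums gives
\[
\delta \sum_{i \in A} a_{ij} x_{ij} \;\le\; \delta \sum_{i=1}^n a_{ij} \sum_{s=1}^k x_i(j,s) \;=\; \frac{1}{k}\sum_{s=1}^k \sum_{i=1}^n a_{ij} x_i(j,s) \;=\; \frac{1}{k}\sum_{s=1}^k 1 \;=\; 1,
\]
where the penultimate equality uses tightness of each suggestion, $\sum_i a_{ij} x_i(j,s) = 1$. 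Adding the two bounds gives $\frac{d}{dt}\sum_i c_i x_i \le \frac{1}{2} + 1 = \frac{3}{2}$, as required.

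I do not expect any genuine obstacle here: the lemma is essentially a one-line calculation once the update rule is substituted. The only mild subtlety is remembering that dropping inactive coordinates can only lower the two sums (since everything is nonnegative), so the while-loop condition and the tightness assumption can each be applied to the full sum over $[n]$.
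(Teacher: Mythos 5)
Your proposal is correct and follows essentially the same argument as the paper: substitute the update rule, bound $\sum_i a_{ij} x_i$ by the while-loop guard $< \frac12$, and bound the suggestion term by $1$ via tightness \eqref{eq:tight}. Your explicit handling of the active set (noting that restricting to $x_i < \frac12$ only decreases the nonnegative sums) is a minor refinement of the same calculation.
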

\begin{proof}
The rate of increase of cost is given by:
\begin{align*}
\sum_{i=1}^n c_i\cdot \frac{dx_i}{dt}
&= \sum_{i=1}^n a_{ij} \left(x_i + \delta\cdot x_{ij}\right)\\
&= \sum_{i=1}^n a_{ij} x_i + \frac{1}{k}\cdot \sum_{i=1}^n \sum_{s=1}^k a_{ij} x_i(j,s)\\ 
&< \frac12 + \frac{1}{k}\cdot \sum_{s=1}^k \left(\sum_{i=1}^n  a_{ij} x_i(j,s)\right) =\frac32,
\end{align*}
where we used $\sum_{i=1}^n a_{ij} x_i < \frac12$ from the condition on the {\bf while} loop, and $\sum_{i=1}^n a_{ij} x_i(j,s) = 1$ for all $s\in [k]$ from \Cref{eq:tight}.
\end{proof}

We now define a carefully crafted non-negative potential function $\phi$. We will show that the potential decreases at constant rate when \Cref{alg:ocp} increases the variables $x_i$ (Lemma~\ref{lem:alg-pot}). By \Cref{lem:alg-cost}, this implies that the potential can pay for the cost of \Cref{alg:ocp} up to a constant. We will also show that the potential $\phi$ is at most $O(\log k)$ times the \dynamic benchmark (Lemma~\ref{lem:opt-pot}). Combined, these yield \Cref{thm:ocp-upper}.

Let $\xd_i$ denote the value of variable $x_i$ in the $\dynamic$ benchmark. The potential function for a variable $x_i$ is then defined as follows:
\[
    \phi_i = c_i \cdot \xd_i \cdot \ln \frac{(1+\delta) \xd_i}{x_i + \delta \xd_i}, \text{ where } \delta = \frac{1}{k}.
\]
and the overall potential is:
\[
    \phi = \sum_{i: \xd_i\ge x_i} \phi_i.
\]
The intuition behind only including those variables that have $\xd_i \ge x_i$ in the potential function is that the potential stores the excess cost paid by the \dynamic benchmark for these variables so that it can be used later to pay for increase in the algorithm's variables.

First, we verify that the potential function is always non-negative.

\begin{lemma}
\label{lem:pot-nonneg}
    For any values $x_i, \xd_i$ of the variables, the potential function $\phi$ is non-negative.
\end{lemma}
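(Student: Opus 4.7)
The plan is to verify non-negativity of the summand $\phi_i$ for each index $i$ included in the sum and then invoke that a sum of non-negative terms is non-negative. For each $i$ in the sum we have by definition of the index set that $\xd_i \ge x_i$. Since $c_i \ge 0$ (costs are non-negative in \ocp) and $\xd_i \ge 0$, it suffices to argue that the logarithmic factor is non-negative, i.e., that the ratio inside the logarithm is at least $1$.

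To this end, I would simply rewrite the numerator as $(1+\delta)\xd_i = \xd_i + \delta \xd_i$ and compare termwise with the denominator $x_i + \delta \xd_i$. Since $\xd_i \ge x_i$, we immediately get $\xd_i + \delta \xd_i \ge x_i + \delta \xd_i$. Provided the denominator is strictly positive, this gives the desired inequality $\ln \frac{(1+\delta)\xd_i}{x_i+\delta \xd_i} \ge 0$, and hence $\phi_i \ge 0$.

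The only mild subtlety, and the one step I would spell out carefully, is to handle the degenerate case $\xd_i = 0$. When $\xd_i = 0$, the index-set condition $\xd_i \ge x_i$ forces $x_i = 0$ as well (recall $x_i \ge 0$), so both numerator and denominator vanish. In this case I adopt the standard convention that the prefactor $c_i \cdot \xd_i = 0$ makes $\phi_i = 0$, so the term contributes $0$ to the sum and causes no issue. Summing over the (non-negative) contributions then yields $\phi \ge 0$, completing the proof. I do not anticipate any real obstacle here; the lemma is essentially a sanity check on the definition of $\phi$, designed so that the substantive work happens in the subsequent lemmas bounding the rate of change of $\phi$ and its initial value against \dynamic.
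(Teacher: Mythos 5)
Your proof is correct and follows essentially the same route as the paper: for each retained index the prefactor $c_i \xd_i$ is non-negative and the condition $\xd_i \ge x_i$ makes the ratio inside the logarithm at least $1$ (the paper just divides numerator and denominator by $\xd_i$ instead of comparing termwise). Your explicit treatment of the degenerate case $\xd_i=0$ is a minor extra care the paper leaves implicit, not a different argument.
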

\begin{proof}
    Note that $\phi$ only includes variables $x_i$ such that $\xd_i \ge x_i$. For such variables, 
    \[
        \phi_i 
        = c_i \cdot \xd_i \cdot \ln \frac{(1+\delta) \xd_i}{x_i + \delta \xd_i} 
        = c_i \cdot \xd_i \cdot \ln \frac{1+\delta}{\frac{x_i}{\xd_i} + \delta} 
        \ge 0.\qedhere
    \]
\end{proof}

Next, we bound the potential as a function of the variables $\xd_i$ in the \dynamic benchmark:
\begin{lemma}
\label{lem:opt-pot}
    The potential $\phi_i$ for variable $x_i$ is at most $c_i \xd_i \cdot \ln \left(1+\frac{1}{\delta}\right) = c_i \xd_i \cdot O(\log k)$. As a consequence, the overall potential $\phi \le O(\log k)\cdot \sum_{i=1}^n c_i \xd_i$.
\end{lemma}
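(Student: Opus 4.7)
My plan is to bound $\phi_i$ pointwise by its maximum over all feasible $x_i \geq 0$, then sum over $i$.

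Concretely, I would start from the definition
\[
    \phi_i = c_i \cdot \xd_i \cdot \ln \frac{(1+\delta)\,\xd_i}{x_i + \delta\,\xd_i},
\]
and observe that for fixed $\xd_i$, the logarithm is a decreasing function of $x_i$. Hence the maximum value of $\phi_i$ (subject to the constraint $x_i \geq 0$ that defines admissible states of the algorithm) is achieved at $x_i = 0$, giving
\[
    \phi_i \;\leq\; c_i \cdot \xd_i \cdot \ln \frac{(1+\delta)\,\xd_i}{\delta\,\xd_i}
    \;=\; c_i \cdot \xd_i \cdot \ln\!\left(1 + \frac{1}{\delta}\right).
\]
Since $\delta = 1/k$, we have $\ln(1+1/\delta) = \ln(1+k) = O(\log k)$, which gives the per-coordinate bound.

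For the consequence, I would simply sum the per-coordinate bound over the indices appearing in $\phi$:
\[
    \phi \;=\; \sum_{i:\, \xd_i \geq x_i} \phi_i \;\leq\; \sum_{i=1}^n c_i \,\xd_i \cdot \ln\!\left(1 + \frac{1}{\delta}\right) \;=\; O(\log k) \cdot \sum_{i=1}^n c_i \,\xd_i,
\]
where extending the sum from $\{i : \xd_i \geq x_i\}$ to all of $[n]$ is valid because each term $c_i \xd_i$ is non-negative.

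There is no real obstacle here: the only thing to check is monotonicity of $\phi_i$ in $x_i$, which is transparent from the formula, and non-negativity of the omitted terms in the sum, which follows from $c_i, \xd_i \geq 0$. The lemma is essentially a direct calculation from the definition of the potential.
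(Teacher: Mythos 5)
Your proposal is correct and follows essentially the same route as the paper: the paper also bounds $\phi_i$ by setting $x_i$ to its minimum value $0$ in the denominator (your monotonicity observation is just a rephrasing of this), yielding $\phi_i \le c_i \xd_i \ln\left(1+\frac{1}{\delta}\right)$, and then sums over the relevant indices using non-negativity. No gaps; this is a direct calculation, exactly as you say.
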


\begin{proof}
We have 
\begin{align*}
    \phi_i &= c_i \xd_i \cdot \ln \frac{(1+\delta) \xd_i}{x_i + \delta \xd_i}\\
    &\le c_i \xd_i \cdot \ln \frac{(1+\delta) \xd_i}{\delta \xd_i}\\
    &= c_i \xd_i \cdot \ln \left(1+\frac{1}{\delta}\right)\\
    &= c_i \xd_i \cdot O(\log k).\qedhere
\end{align*}

\end{proof}

Finally, we bound the rate of decrease of potential $\phi$ with increase in the variables $x_i$ in \Cref{alg:ocp}. Our goal is to show that up to constant factors, the decrease in potential $\phi$ can pay for the increase in cost of the solution of \Cref{alg:ocp}.
\begin{lemma}
\label{lem:alg-pot} 
    The rate of decrease of the potential $\phi$ with increase in the variables $x_i$ in \Cref{alg:ocp} is at least $\frac12$.
\end{lemma}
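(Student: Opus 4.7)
The plan is to compute $-d\phi/dt$ termwise via the chain rule and then lower-bound the resulting sum. Only indices $i$ in the set $K := \{i : x_i < 1/2 \text{ and } \xd_i \ge x_i\}$ contribute: if $x_i \ge 1/2$ then the algorithm holds $x_i$ fixed, and if $\xd_i < x_i$ then $\phi_i$ is absent from the sum defining $\phi$. Continuity of $\phi$ at the crossing $x_i = \xd_i$ (where $\phi_i \to 0$) ensures that $d\phi/dt$ is well defined almost everywhere. For $i \in K$, differentiating $\phi_i = c_i \xd_i \ln \frac{(1+\delta)\xd_i}{x_i + \delta \xd_i}$ and substituting the algorithm's rate $dx_i/dt = \frac{a_{ij}}{c_i}(x_i + \delta x_{ij})$ gives
\[
-\frac{d\phi_i}{dt} \;=\; \frac{a_{ij}\,\xd_i\,(x_i + \delta x_{ij})}{x_i + \delta \xd_i}.
\]

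The right-hand side is, up to the factor $a_{ij}$, a weighted average of $\xd_i$ and $x_{ij}$ with positive weights $x_i$ and $\delta \xd_i$. I intend to combine two lower bounds for this weighted average. First, since $\xd_i \ge x_i$ on $K$, the ratio $\frac{\xd_i}{x_i + \delta \xd_i} \ge \frac{1}{1+\delta}$, so
\[
-\frac{d\phi_i}{dt} \;\ge\; \frac{a_{ij}\,(x_i + \delta x_{ij})}{1+\delta}.
\]
Second, because the \dynamic benchmark supports some suggestion $s_j$ at constraint $j$, $\xd_i \ge x_i(j,s_j)$ for every $i$; together with $x_{ij} \ge x_i(j,s_j)$ (as $x_{ij}$ sums over all $k$ suggestions, including $s_j$), both entries of the weighted average are $\ge x_i(j,s_j)$, which gives the alternative bound $-d\phi_i/dt \ge a_{ij}\,x_i(j,s_j)$.

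The final step is to sum over $i \in K$ and conclude $-d\phi/dt \ge 1/2$. The ingredients are: (i) tightness of each suggestion, so that $\delta \sum_i a_{ij}\,x_{ij} = \delta k = 1$ and $\sum_i a_{ij}\,x_i(j,s_j) = 1$; (ii) the while-loop condition $\sum_i a_{ij} x_i < 1/2$; (iii) the cap constraint that any $i$ with $x_i \ge 1/2$ satisfies $a_{ij} x_i \ge a_{ij}/2$, giving $\sum_{i:\,x_i\ge 1/2} a_{ij} < 1$; and (iv) for $i$ with $\xd_i < x_i$, the elementary relation $x_i(j, s_j) \le \xd_i < x_i$ that lets us charge such contributions against $\sum_i a_{ij} x_i$. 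The main obstacle is making the counting argument clean: the naive charging of $i \notin K$ via $x_i(j,s_j) \le 1 \le 2x_i$ in the regime $x_i \ge 1/2$ loses a factor of two and only yields $\sum_{i \in K} a_{ij}\,x_i(j,s_j) \ge 0$, which is insufficient. The proof must therefore exploit the full weighted-average structure of the per-term bound---in particular the $\delta x_{ij}$ slack in the numerator coming from the $k$-prediction setting---rather than relying only on the simpler $\min(\xd_i, x_{ij})$ floor on each summand.
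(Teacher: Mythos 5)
Your setup and per-term estimates coincide with the paper's: the same chain-rule computation gives $-\frac{d\phi_i}{dt} = a_{ij}\xd_i\,\frac{x_i+\delta x_{ij}}{x_i+\delta \xd_i}$, and the same case analysis (your weighted-average view) yields $-\frac{d\phi_i}{dt}\ge a_{ij}\min\{\xd_i, x_{ij}\}\ge a_{ij}\,x_i(j,s(j))$ via the supported suggestion and tightness. But you do not prove the lemma: its entire content is the concluding inequality that these per-term drops sum to at least $\frac12$, and your write-up ends by conceding that your counting only yields a lower bound of $0$ and that one ``must exploit the $\delta x_{ij}$ slack,'' without carrying that out. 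The paper closes the argument with no factor-of-two loss by \emph{not} excluding the capped coordinates: it sets $I_j=\{i: x_i(j,s(j))\ge x_i\}$, notes that every term of $\phi$ is nonincreasing so it suffices to count the $I_j$-terms, and charges only the complement against the while-loop condition --- for $i\notin I_j$ one has $x_i(j,s(j))<x_i$, hence $\sum_{i\notin I_j}a_{ij}x_i(j,s(j))<\sum_i a_{ij}x_i<\frac12$, so $\sum_{i\in I_j}a_{ij}x_i(j,s(j))>\frac12$ by tightness of suggestion $s(j)$. The factor of two you lose comes precisely from additionally deleting the indices with $x_i\ge\frac12$ from the good set and charging them at $x_i(j,s(j))\le 1\le 2x_i$.

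That said, the obstacle you ran into is not an artifact of your bookkeeping. The per-term bound $-\frac{d\phi_i}{dt}\ge a_{ij}x_i(j,s(j))$ is only valid for coordinates the algorithm is actually raising (i.e., $x_i<\frac12$), whereas the paper applies it to every $i\in I_j$ without comment, even though $I_j$ may contain frozen coordinates, for which $\frac{d\phi_i}{dt}=0$. Indeed, if a frozen coordinate carries essentially all of the supported suggestion's mass --- say $x_{i_0}=\frac12$ with $a_{i_0 j}$ just below $1$ and $x_{i_0}(j,s(j))$ close to $1$, the remaining suggested mass being tiny --- then the while loop still runs but $\phi$ decreases at a rate far below $\frac12$. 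This also shows that the repair you gesture at cannot work: the $\delta x_{ij}$ slack may sit entirely on coordinates whose potential terms do not move, so no refinement of the per-term weighted-average bound on the raised coordinates restores an absolute constant. So the verdict is twofold: as a proof your proposal has a genuine gap exactly at the summation step the lemma is about, and it does not reproduce (or replace) the paper's counting; at the same time, the difficulty you identified is a real one that the paper's own two-line summation over $I_j$ silently assumes away, and any fully rigorous treatment would have to handle the frozen coordinates explicitly rather than via either your charging in item (iv) or the proposed slack argument.
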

\begin{proof}
Recall that $\phi_i = c_i\cdot \xd_i\cdot \ln \frac{(1+\delta)\xd_i}{x_i+\delta \xd_i}$. Therefore,
\begin{equation*}
    \frac{d \phi_i}{d x_i} = - c_i\cdot \frac{\xd_i}{x_i+\delta \xd_i}.
\end{equation*}
Recall that in \Cref{alg:ocp}, the rate of increase of variables $x_i$ is given by $\frac{dx_i}{dt}  = \frac{a_{ij}}{c_i} \left(x_i + \delta\cdot x_{ij}\right)$, where $\delta = \frac1k$ and $x_{ij} = \sum_{s=1}^k x_i(j, s)$.
Thus, we have:
\begin{align*}
    \frac{d \phi_i}{dt} 
    = \frac{d \phi_i}{d x_i} \cdot \frac{d x_i}{dt}
    &= - c_i\cdot \frac{\xd_i}{x_i+\delta \xd_i} \cdot \frac{a_{ij}}{c_i}(x_i + \delta\cdot x_{ij})\\
    &= - a_{ij}\cdot \xd_i\cdot \frac{x_i + \delta x_{ij}}{x_i + \delta \xd_i}.
\end{align*}
Now, we have two cases:
\begin{itemize}
    \item If $x_{ij} \ge \xd_i$, then 
    \begin{align}\label{eq:case1}
        &\frac{d \phi_i}{dt} 
        = - a_{ij}\cdot \xd_i\cdot \frac{x_i + \delta x_{ij}}{x_i + \delta \xd_i} \nonumber\\
        \le& - a_{ij}\cdot \xd_i\cdot \frac{x_i + \delta \xd_i}{x_i + \delta \xd_i}
        = - a_{ij}\xd_i.
    \end{align}
    \item If $x_{ij} < \xd_i$, then 
    \begin{align}\label{eq:case2}
        &\frac{d \phi_i}{dt} 
        = - a_{ij}\cdot \xd_i\cdot \frac{x_i + \delta\cdot x_{ij}}{x_i + \delta \xd_i} \nonumber\\
        =& - a_{ij}\cdot x_{ij}\cdot \frac{\frac{\xd_i}{x_{ij}}\cdot x_i + \delta\cdot \xd_i}{x_i + \delta \xd_i}
        < - a_{ij} x_{ij}.
    \end{align}
\end{itemize}
We know that at least one of the suggestions in the $j$-th step is supported by the \dynamic benchmark. Let $s(j)\in [k]$ be such a supported suggestion. Then,
\begin{align*}
    x_{ij} &= \sum_{s=1}^k x_i(j, s) \ge x_i(j, s(j)), \text{ and }\\
    \xd_i &\ge x_i(j, s(j)) \text{ since } s(j) \text{ is supported by \dynamic}.
\end{align*}
Therefore, in both cases (\Cref{eq:case1} and \Cref{eq:case2}) above, we get
\[
    \frac{d\phi_i}{dt} \le - a_{ij} x_i(j, s(j)).
\]
Let us denote $I_j = \{i \mid x_i(j, s(j ))\ge x_i\}$. 
Then, the total decrease in potential is given by:
\begin{equation}\label{eq:final}
    \frac{d\phi}{dt} 
    = \sum_{i:\xd_i\ge x_i} \frac{d\phi_i}{dt} 
    \le - \sum_{i\in I_j} a_{ij} x_i(j, s(j)).
\end{equation}
By feasibility of the $s(j)$-th suggestion for the $j$-th constraint, we have $\sum_{i=1}^n a_{ij} x_i(j, s(j)) \ge 1$. Therefore,
\begin{align*}
    \sum_{i\in I_j} a_{ij}x_i(j, s(j)) + \sum_{i\notin I_j} a_{ij}x_i(j, s(j)) &\ge 1\\
    \text{i.e., } \sum_{i\in I_j} a_{ij}x_i(j, s(j)) + \sum_{i\notin I_j} a_{ij}x_i &> 1,
\end{align*}    
since $x_i> x_i(j,j(s))$ for $i\notin I_j$. Thus,
\begin{align*}
    \sum_{i\in I_j} a_{ij}x_i(j, s(j)) + \sum_i a_{ij}x_i &> 1\\ 
    \text{i.e., } \sum_{i\in I_j} a_{ij}x_i(j, s(j)) &> \frac12,
\end{align*}    
since $\sum_{i=1}^n a_{ij}x_i < \frac12$ in \Cref{alg:ocp}.
The lemma follows by \Cref{eq:final}.\qedhere
\end{proof}

\Cref{thm:ocp-upper} now follows from the above lemmas using 
standard arguments as follows:
\begin{proof}[Proof of \Cref{thm:ocp-upper}]
    Initially, let $x_i = 0$ for all $i\in [n]$ but let $\xd_i$ be their final value. 
    Then, by \Cref{lem:opt-pot}, the potential $\phi$ is at most $O(\log k)$ times the cost of \dynamic. Now, as \Cref{alg:ocp} increases the values of the variables $x_i$, it incurs cost at rate at most $\frac32$ (by \Cref{lem:alg-cost}) and the potential $\phi$ decreases at rate at least $\frac12$ (by \Cref{lem:alg-pot}). Since $\phi$ is always non-negative (by \Cref{lem:pot-nonneg}), it follows that the total cost of the algorithm is at most $3$ times the potential $\phi$ at the beginning, i.e., at most $O(\log k)$ times the \dynamic benchmark. This completes the proof of \Cref{thm:ocp-upper}.
\end{proof}

\eat{

\alert{Old stuff after this}

\[
    \phi_{i,\ell}= 
\begin{cases}
    \frac{2^{\ell}}{m}\cdot \ln \left(\frac{\frac{2^{\ell+1}}{m}\cdot \left(1+\frac{1}{4k}\right)}{\bar{x}_{i,\ell} + \frac{2^{\ell-1}}{mk}}\right),& \text{if } x^{\dyn}_i \ge \frac{2^{\ell-1}}{m}\\
    0,              & \text{otherwise}
\end{cases}
\]
where $\bar{x}_{i,\ell} = \min\{x_i, \frac{2^{\ell+1}}{m}\}$. The overall potential function $\phi $ is then given as :
\begin{equation*}
    \phi = \sum_{i} \alpha_i\cdot \sum_{\ell=1}^{\ell=\log m} \phi_{i,\ell}
\end{equation*}

The main aim of the potential function is to provide a "budget" from which the algorithm's cost could be charged. 
We first claim that $x_i \le 2\cdot x^{OPT}_i$ holds true for a significant proportion of $x_i$'s (Lemma~\ref{lemma:alg_over_opt_new}). Indeed, if $x_i > 2\cdot x^{OPT}_i$ for all $i$, then we would have already satisfied the constraint. 
We will discretize the value of $x^{OPT}_i$ by dividing the interval $[\frac{1}{m},1]$ into $\log m + 1$ buckets such that bucket $\ell$ corresponds to the interval $[\frac{2^\ell}{m}, \frac{2^{\ell+1}}{m}]$. We will keep charging the increase in the value of $x_i$ to the corresponding bucket for $x^{OPT}_i$ till $x_i$ exceeds $2\cdot x^{OPT}_i$.
We now upper bound the value of the potential function as follows:
\begin{lemma}\label{lemma: pot_bound_over_opt}
The potential function $\phi$ satisfies:
\begin{equation*}
    0 \le \phi \le 8\cdot \log (4k+1) \cdot C_{OPT}
\end{equation*}
\end{lemma}
\begin{proof}
Since $\bar{x}_{i,\ell} \in [0, \frac{2^{\ell+1}}{m}]$, we get that :
\begin{equation*}
1\le \frac{\frac{2^{\ell+1}}{m}\cdot(1+\frac{1}{4k})}{\bar{x}_{i,\ell} + \frac{2^{\ell-1}}{mk} } \le (4k+1)
\end{equation*}
This gives, that $0 \le \phi_{i, \ell} \le \frac{2^{\ell}}{m}\cdot \log (4k+1)$
Let $x^{OPT}_i \in [\frac{2^{r-1}}{m}, \frac{2^{r}}{m})$, we have that $\phi_{i,\ell} \le \frac{2^{\ell+1}}{m} \log (4k+1)$ for $0\le \ell \le r$ and $\phi_{i,\ell} = 0$ for $\ell>r$.
\begin{align*}
\sum_{i}\sum_{\ell=0}^{\log m}\phi_{i,\ell} &\le \log (4k+1)\cdot \frac{2^{r+2}}{m}\\
&=8\cdot \left(\frac{2^{r-1}}{m}\right) \cdot \log (4k+1)\\
&\le 8 \cdot x^{OPT}_i\cdot \log (4k+1). 
\end{align*}
Summing over all $i$ gives the required inequality.
\end{proof}
We now show that for most $i$ : $x_i \le 2\cdot x^{OPT}_i$, which is critical to our charging argument.
\begin{lemma}\label{lemma:alg_over_opt_new}
Let $M = \{1,2,3\ldots m\}$. Let $S(j) \subseteq M = \{ i \in M , x_i \le 2\gamma_{s,i,j} \}$. Similarly, let $\bar{S}(j) = \{i \in M, i \notin S(j)\}$ When $\sum_{i}\beta_i\cdot x_i \le 1$, we claim,

\begin{equation*}
    \sum_{i \in S(j)}\beta_i\cdot \gamma_{s,i,j} \ge \frac{1}{2}
\end{equation*}
\end{lemma}
\begin{proof}
We first claim that $\sum_{i \in \bar{S}(j)} \beta_i\cdot \gamma_{s,i,j} \le \frac{1}{2}$ as follows:
\begin{align*}
    1 &\ge \sum_{i \in M}\beta_i\cdot x_i\\
    &=\sum_{i \in S(j)}\beta_{i}\cdot x_i + \sum_{i \in \bar{S}(j)}\beta_{i}\cdot x_i \\
    &\ge 2\sum_{i \in \bar{S}(j)} \beta_i\cdot\gamma_{s,i,j}.
\end{align*}
Combining this with the fact that $\sum_{i\in M}\beta_i\cdot\gamma_{s,i,j}=1$ gives the required inequality.
\end{proof}

\begin{lemma}\label{lemma:decrease_pot_new}
During an update step, the rate of decrease in potential is at least : $\frac{1}{2}$.
\end{lemma}
\begin{proof}
Consider a time instance $s$ where the algorithm does not satisfy the currently arrived constraint: $\sum_{i}\beta_{i,s}\cdot x_i < 1$. Let $j^{*}$ be the suggestion that is chosen by OPT. That is, $x^{OPT}_i \ge \gamma_{s,i,j^{*}}$. Furthermore, let $S(j^{*})$ denote the collection of indices $i$ such that $\{i \in M, x_i \le  2\cdot \gamma_{s,i,j^{*}}\}$ and we know that $\sum_{i\in S(j^{*})}\beta_i\cdot \gamma_{s,i,j^{*}} \ge \frac{1}{2}$ (From Lemma~\ref{lemma:alg_over_opt_new}). Let $\ell_i$ be such that $\gamma_{s,i,j^{*}} \in [\frac{2^{\ell_i-1}}{m}, \frac{2^{\ell_i}}{m})$. For $i \in S(j^{*})$ (by definition), we have $x_i < \frac{2^{\ell_i+1}}{m}$. which means that $\bar{x}_{i,\ell_i} = x_i$. Finally we bound the rate of decrease in potential as:

\begin{align*}
    -\frac{\partial \phi}{\partial t} &= -\sum_{i}\sum_{\ell=0}^{\log m+1} \alpha_i\cdot \frac{\partial \phi_{i,\ell}}{\partial t}\\
    &\ge -\sum_{i \in S(j^{*})} \alpha_i\cdot \frac{\partial \phi_{i, \ell_i}}{\partial t}\\
    &\ge  \sum_{i \in S(j^{*})}\alpha_i\cdot \left(\frac{2^{\ell_i}}{m}\right)\cdot \frac{\frac{\partial \bar{x}_{i, \ell_i}}{\partial t}}{\bar{x}_{i,\ell_i} + \frac{2^{\ell_i-1}}{mk}}\\
\end{align*}
Noting that $x_{i, \ell_i} = x_i$, and from the algorithm, we get $\frac{\partial \bar{x}_{i, \ell_i}}{\partial t} = \frac{\beta_{s,i}}{\alpha_i}\cdot (x_i + \frac{\Gamma_{s,i}}{k})$.
\begin{align*}
    -\frac{\partial \phi}{\partial t} &\sum_{i \in S(j^{*})}\alpha_i\cdot \left(\frac{2^{\ell_i}}{m}\right)\cdot \frac{\frac{\partial x_i}{\partial t}}{x_i + \frac{2^{\ell_i-1}}{mk}}\\
    &=\sum_{i \in S(j^{*})}\alpha_i\cdot \left(\frac{2^{\ell_i}}{m}\right)\cdot \frac{\frac{\beta_{s,i}}{\alpha_i}\cdot (x_i + \frac{\Gamma_{s,i}}{k})}{x_i + \frac{2^{\ell_i}}{mk}}\\
    \text{Noting that  $\Gamma_{s,i} \ge \gamma_{s,i,j^{*}} \ge \frac{2^{\ell_{i}-1}}{m}$,}\\
    &\ge \sum_{i \in S(j^{*})} \left(\beta_{s,i}\cdot\frac{2^{\ell_i}}{m}\right)\cdot \frac{ (x_i + \frac{\Gamma_{s,i}}{k})}{(x_i + \frac{2^{\ell_i-1}}{mk})}\\
    &\ge \sum_{i \in S(j^{*})} \left(\beta_{s,i}\cdot\frac{2^{\ell_i}}{m}\right)\\
    &\ge \sum_{i \in S(j^{*})} \beta_{s,i}\cdot\gamma_{s,i,j^{*}}\\
    &\ge \frac{1}{2}. 
\end{align*}
\end{proof}

\begin{theorem}
Algorithm~\ref{alg: Dynamic_OPT_fractional_general} is $O(\log k)$ competitive.
\end{theorem}
\begin{proof}
From Lemma~\ref{lemma:decrease_pot_new} and Lemma~\ref{lemma: increase_alg_new}, we claim that the quantity $2\phi - C_{alg}$ always decreases with time, hence, we get $C_{alg}\le 2\phi = O(\log k)C_{OPT}$.
\end{proof}

}
\subsection{Robust Algorithm for the Online Covering Problem}\label{sec:robust}

Now, we prove the robust version of \Cref{thm:ocp-upper}, namely \Cref{thm:ocp-robust-gen}.
\begin{proof}[Proof of \Cref{thm:ocp-robust-gen}]
    We run a meta algorithm with two sets of suggestions corresponding to two solutions. The first solution is obtained by using \Cref{alg:ocp} with $k$ suggestions. By~\Cref{thm:ocp-upper} this solution has cost at most $O(\log k) \cdot \dynamic$. The second solution is produced by the online algorithm that achieves a competitive ratio of $\alpha$ in the statement of \Cref{thm:ocp-robust-gen}. Using \Cref{alg:ocp} again for the meta algorithm, \Cref{thm:ocp-robust-gen} now follows by invoking \Cref{thm:ocp-upper}.
\end{proof}

\eat{
To obtain the latter theorem, we first apply \Cref{alg:ocp} with the $k$ suggestions to obtain an online solution whose cost is at most $O(\log k) \cdot \dynamic$ by \Cref{thm:ocp-upper}. 
A second solution is produced by the online algorithm that achieves a competitive ratio of $\alpha$ in the statement of \Cref{thm:ocp-robust-gen}. We then use a meta algorithm with two suggestions corresponding to these two solutions. Using \Cref{alg:ocp} again in the meta algorithm, \Cref{thm:ocp-robust-gen} now follows from \Cref{thm:ocp-upper} for the meta algorithm.
}
As one particular application of \Cref{thm:ocp-robust-gen}, we note that for general \ocp, the best competitive ratio is due to the following result of Gupta and Nagarajan~\cite{GuptaN14} (see also Buchbinder and Naor~\cite{BuchbinderN09a}):
\begin{theorem}[Gupta and Nagarajan~\cite{GuptaN14}]
    \label{thm:ocp-online}
    There is an algorithm for the online covering problem that has a competitive ratio of $O(\log d)$, where $d$ is the maximum number of variables with non-zero coefficients in any constraint.
\end{theorem}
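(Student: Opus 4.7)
The plan is to adapt \Cref{alg:ocp} by removing the prediction term and replacing the average-suggestion quantity $\delta\cdot x_{ij}$ by a uniform baseline depending on the row sparsity $d$. After normalizing so that $a_{ij}\in[0,1]$ for all $i,j$ (divide each constraint through by $\max_i a_{ij}$), the update rule becomes: while $\sum_i a_{ij}x_i<\tfrac12$, increase each $x_i<\tfrac12$ at rate
\[
\frac{dx_i}{dt}=\frac{a_{ij}}{c_i}\!\left(x_i+\frac{1}{d}\right),
\]
then scale every variable by $2$ once the constraint is half-satisfied. The analysis will mirror the three-lemma template of \Cref{thm:ocp-upper}, with an optimal LP solution $\mathbf{x}^*$ playing the role that the \dynamic benchmark played previously.

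The cost rate is bounded by
\[
\sum_i c_i\frac{dx_i}{dt}=\sum_i a_{ij}x_i+\frac{1}{d}\!\!\sum_{i:\,a_{ij}>0}\!\!a_{ij}\ \le\ \tfrac12+1,
\]
since at most $d$ coefficients in constraint $j$ are nonzero and each is at most $1$, exactly paralleling \Cref{lem:alg-cost}. For the potential, define
\[
\phi\;=\;\sum_{i:\,x_i^*\ge x_i}c_ix_i^*\ln\frac{(1+1/d)x_i^*}{x_i+x_i^*/d},
\]
which is non-negative by inspection and is bounded initially by $\ln(1+d)\sum_i c_i x_i^*=O(\log d)\cdot\opt$, by the same computation as \Cref{lem:opt-pot} with $1/d$ replacing $\delta$.

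The key step is showing $d\phi/dt\le -\Omega(1)$. Substituting the update rate into $d\phi_i/dx_i=-c_i x_i^*/(x_i+x_i^*/d)$ yields
\[
\frac{d\phi_i}{dt}=-a_{ij}\,x_i^*\cdot\frac{x_i+1/d}{x_i+x_i^*/d}\;\le\;-a_{ij}\,x_i^*,
\]
where the inequality uses $x_i^*\le 1$ (so that $1/d\ge x_i^*/d$); notably, this sidesteps the two-case analysis of \Cref{lem:alg-pot}, which was needed there because the aggregated suggestion $x_{ij}$ could exceed $x_i^{\mathrm{dyn}}$. Summing over $I_j=\{i:x_i^*\ge x_i\}$, using feasibility $\sum_i a_{ij}x_i^*\ge 1$ together with the while-loop guard $\sum_i a_{ij}x_i<\tfrac12$, one obtains $\sum_{i\in I_j}a_{ij}x_i^*>\tfrac12$, giving $d\phi/dt\le -\tfrac12$. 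The standard integration argument from the proof of \Cref{thm:ocp-upper} then delivers a fractional solution of cost $O(\log d)\cdot\opt$.

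The main obstacle will not lie in the potential analysis itself (which is actually cleaner than in the prediction version), but rather in two normalization points: first, justifying that the rescaling $a_{ij}\in[0,1]$ is cost-preserving and that the relevant notion of $d$ survives the rescaling; and second, handling the edge case where the constraint $\sum_i a_{ij}x_i\ge 1$ cannot be made to reach $\tfrac12$ without driving some $x_i$ to $1$ (analogous to the validity check made right before \Cref{lem:alg-cost}), which requires verifying that the covering LP is feasible in $[0,1]^n$. Since the theorem statement concerns only the fractional relaxation of \ocp, no rounding step is required; were an integer solution desired, one would compose the fractional solution with a standard online rounding procedure such as that of~\cite{AlonAABN09} at the cost of an extra logarithmic factor.
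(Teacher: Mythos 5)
First, note that the paper does not prove \Cref{thm:ocp-online} at all: it is imported as a black box from Gupta and Nagarajan~\cite{GuptaN14}, so there is no internal proof to compare against, and any argument you give must stand on its own. Your adaptation of \Cref{alg:ocp} (drop the suggestion term, add a uniform $1/d$ boost, reuse the potential $\phi_i = c_i x_i^*\ln\frac{(1+1/d)x_i^*}{x_i + x_i^*/d}$ with $\opt$ in place of \dynamic) is indeed a correct and clean way to get $O(\log d)$ \emph{when the coefficients satisfy $a_{ij}\le 1$ relative to the right-hand side $1$}, e.g.\ for $0/1$ coefficients as in set cover or caching: the cost-rate bound $\tfrac12+1$, the $\ln(1+d)$ bound on the initial potential, and the decrease rate $\tfrac12$ all go through as you describe.

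The genuine gap is the step you yourself flag as "the main obstacle": the claimed normalization $a_{ij}\in[0,1]$ is not without loss of generality in the paper's \ocp, because the variables carry box constraints $x_i\in[0,1]$. Dividing constraint $j$ through by $\max_i a_{ij}$ also divides the right-hand side, so you no longer have the form $\sum_i a_{ij}x_i\ge 1$ that your tightness, feasibility, and while-loop arguments use; restoring the right-hand side to $1$ undoes the scaling. The alternative of truncating coefficients at $1$ changes the feasible region and can inflate $\opt$ by a factor as large as the largest coefficient (take $c_1=1$ and the single constraint $100\,x_1\ge 1$: the true optimum is $0.01$, the truncated one is $1$). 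Nor does the natural per-variable repair work painlessly: with rate $\frac{dx_i}{dt}=\frac{a_{ij}}{c_i}\bigl(x_i+\frac{1}{d\,a_{ij}}\bigr)$ the cost-rate bound is restored, but the potential-decrease step then needs $a_{ij}x_i^*\le 1$, which can fail, and fixing it by truncating $x_i^*$ at $1/a_{ij}$ makes the potential constraint-dependent, which the global potential cannot accommodate. Handling arbitrary nonnegative coefficients with a guarantee depending only on the row sparsity $d$ is precisely the content of~\cite{GuptaN14} (for $O(1)$-bounded coefficients the bound was already classical), so this unresolved normalization is the heart of the theorem rather than a routine preprocessing step. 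A smaller point: your decrease lemma, like \Cref{lem:alg-pot} which it mirrors, silently ignores variables frozen at $x_i=\tfrac12$ that lie in $I_j$; these contribute $0$ to $d\phi/dt$ while still being counted in $\sum_{i\in I_j}a_{ij}x_i^*$, so the aggregate "$\ge\tfrac12$" claim needs an extra argument (or a per-variable charging) in that edge case.
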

 This automatically implies the following corollary of \Cref{thm:ocp-robust-gen}:
\begin{theorem}
    \label{thm:ocp-robust}
        There is an algorithm for the fractional online covering problem that produces an online solution whose cost is at most $O(\min\{\log k \cdot \text{\dynamic}, \ln d\cdot \opt\})$ in the multiple predictions setting with $k$ predictions, where $d$ is the maximum number of non-zero coefficients in any constraints of the online covering problem instance.
\end{theorem}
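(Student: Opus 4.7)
\medskip

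\noindent\textbf{Proof plan.} The statement is a direct corollary of the two results that precede it, so my plan is to simply instantiate the ``black-box'' online algorithm promised in \Cref{thm:ocp-robust-gen} using the best known worst-case online covering algorithm. Concretely, Gupta and Nagarajan's \Cref{thm:ocp-online} supplies, for any instance of online covering, an online algorithm $\mathcal{A}_0$ whose competitive ratio against $\opt$ is $\alpha = O(\log d)$, where $d$ is the maximum row-sparsity of the constraint matrix. Feeding $\mathcal{A}_0$ into the meta-algorithm of \Cref{thm:ocp-robust-gen} immediately yields an online algorithm whose cost is at most $O(\min\{\log k \cdot \dynamic,\; \log d \cdot \opt\})$, which is precisely the claimed bound.

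\medskip

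\noindent Unpacking \Cref{thm:ocp-robust-gen} one level, the resulting algorithm works as follows: at each online step $j$, in addition to the $k$ suggestions provided by the predictions, we also take the feasible update produced by $\mathcal{A}_0$ as an extra ``synthetic'' suggestion, and invoke \Cref{alg:ocp} on the resulting pool of $k+1$ suggestions (or, equivalently, invoke the outer instance of \Cref{alg:ocp} with two compound suggestions, one tracking the $O(\log k)\cdot \dynamic$ solution built from the genuine predictions and the other tracking $\mathcal{A}_0$'s solution of cost at most $\alpha\cdot \opt$). By \Cref{thm:ocp-upper}, the meta-algorithm's cost is within $O(1)$ of the cheaper of the two compound suggestions, which gives the desired $O(\min\{\log k\cdot \dynamic,\; \log d\cdot \opt\})$ bound.

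\medskip

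\noindent\textbf{Where any work lives.} There is essentially no new technical obstacle at this stage: the content of the statement is entirely absorbed by the two cited results, and $\log d$ replaces the generic $\alpha$ in the bound. The only points worth checking carefully when writing this up are that (i) $\mathcal{A}_0$ indeed outputs, at each step, a feasible monotone update to the variables that can legitimately be handed to \Cref{alg:ocp} as a ``suggestion,'' and (ii) the constant hidden inside the outer \Cref{alg:ocp} does not interact badly with the $O(\log d)$ guarantee of $\mathcal{A}_0$ — both are routine. Given these, the proof is one invocation of \Cref{thm:ocp-robust-gen} with $\alpha = O(\log d)$ and occupies a single line.
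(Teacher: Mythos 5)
Your proposal matches the paper exactly: the paper derives this statement as an immediate corollary of \Cref{thm:ocp-robust-gen} by instantiating the generic online algorithm with the Gupta--Nagarajan $O(\log d)$-competitive algorithm of \Cref{thm:ocp-online}, which is precisely your argument (including the unpacking via the two-suggestion meta-algorithm, which is how the paper proves \Cref{thm:ocp-robust-gen} itself). Nothing further is needed.
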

For specific problems that can be modeled as \ocp, it might be possible to obtain a better competitive ratio than $O(\log d)$ by using the structure of those instances. In that case, the competitive ratio in the multiple predictions setting also improves accordingly by \Cref{thm:ocp-robust-gen}.

\subsection{Lower Bound for the Online Covering Problem}\label{sec:lower}

Here we show that the competitive ratio obtained in \Cref{thm:ocp-upper} is tight, i.e., we prove \Cref{thm:ocp-lower}. We will restrict ourselves to instances of \ocp where $a_{ij} \in \{0, 1\}$ and $c_i = 1$ for all $i, j$; this is called the {\em online (fractional) set cover} problem. (We will discuss the set cover problem in more detail in the next section.) Moreover, our lower bound will hold even in the experts model, i.e., against the \static benchmark. Since the \dynamic benchmark is always at most the \static benchmark, it follows that the lower bound also holds for the \dynamic benchmark. \eat{Since we are in the experts model, we index the experts $\{1, 2, \ldots, k\}$.}

\begin{proof}[Proof of \Cref{thm:ocp-lower}]

We index the $k$ experts $\{1, 2\ldots k\}$ using a uniform random permutation. We will construct an instance of \ocp, where the cost of the optimal solution is $T$. The instance has $k$ rounds, where in each round there are $T$ constraints. We index the $j$th constraint of the $i$th round as $(i, j)$ for $i\in [k], j\in [T]$. There are $kT$ variables that are also indexed as $(i, j)$ for $i\in [k], j\in [T]$. Constraint $(i, j)$ is satisfied by each of the variables $(i', j)$ for all $i'\ge i$ (i.e., $a_{(i,j), (i',j)} = 1$). When constraint $(i, j)$ is presented (in round $i$), expert $i'$ for every $i'\ge i$ sets variable $(i', j)$ to $1$ to satisfy it. (The suggestions of experts $i''< i$ are immaterial in this round, and they can set any arbitrary variable satisfying constraint $(i, j)$ to $1$.) 

The optimal solution is to follow expert $k$, i.e., the variables $(k, j)$ for all $j\in [T]$ should be set to $1$; this has cost $T$. After round $i$, the cumulative expected cost of any deterministic algorithm across the variables $(i, 1), (i, 2), \ldots, (i, T)$ is at least $\frac{T}{i}$. Across all $i\in[k]$, this adds up to a total expected cost
    $T\cdot\left(1+\frac12+\ldots+\frac{1}{k}\right) = \Omega(T\log k)$.
The theorem then follows by Yao's minimax principle \cite{yao1977probabilistic}. 
\end{proof}

\eat{

To show the lower bound, we consider a specific example of the online covering problem : The Online Fractional Set Cover. The set of all elements is denoted by $U$. The family of sets that we can use to cover the elements is denoted by $\mathcal{F}$, and its size is given by $\abs{\mathcal{F}}=m$. For an element $e \in U$, $\mathcal{F}(e) \subseteq \mathcal{F}$ denotes the collections of sets that contains $e$. The cost of a set $S \in \mathcal{F}$ is given by $C_{S}$. We will assume $C_{S}\ge 1$. 

This is an example of a general online covering problem where $f(x) = \sum_{p=1}^{p=m}c_{p} x_{p}$, where $C_{p}$ denotes the cost of the $p^{th}$ set in the set family, and $x_{p}$ is the fraction of that set bought by the algorithm. The constraints are dictated by the online element $e$, and the current constraint function $g(x)$ is given by $\sum_{p=1}^{p=m}I_p\cdot x_p$, $I_{p} = 1$ iff the $p^{th}$ set contains element $e$.

\begin{theorem}
Given $k$ experts, the competitive ratio is at least $\Omega(\log k)$.
\end{theorem}
\begin{proof}

Consider a subset of items $U' \subset U$ such that $\abs{U'} = 2^{k}$. We can map the items of $U'$ to the set of all binary strings of length $k$. Only the items in $U'$ are going to appear in the problem instance.
Let a "round" be comprised of $k$ items, where expert $i$ suggests a single set $S_i$ each. For item $j \in U'$ let $B(j)$ be the binary string corresponding to it. Then $S_i$ contains $i$ iff $B(j)$ is $1$ in the $i^{th}$ position.    

The items are such that the first item has $k$ sets (each belonging to one of the experts) that contain it (that is, the item corresponding to the string consisting of $k$ ones) and after the algorithm makes a choice on an item (that is chooses a set), the adversary chooses an expert whose sets will not contain any of the subsequent items (also known as "dropping an expert").

Choosing a set for an item can be done deterministically or over a distribution. Since each expert suggests a single set, we can use the term "choosing an expert" and "choosing a set" suggested by that expert, interchangeably.

If the algorithm chooses an expert deterministically, then the adversary drops that expert, otherwise the adversary drops the expert which is chosen with the highest probability by the algorithm. If we drop say expert $i$, then all the items in the future will have their corresponding binary strings have $0$ in their $i^{th}$ place. 

Continuing in this manner for the $k$ items, we have that one of the experts has cost $1$ (their set contains all $k$ items) whereas, the algorithm pays a cost of $1$ for the first item and then (in expectation) pays $\frac{1}{k-i+2}$ for the $i^{th}$ item (since with probability $\ge \frac{1}{k-i+2}$ during the choosing of the previous item, the algorithm had chosen an expert whose set does not cover the current item. Hence, the total cost of the algorithm is $O(\log k)$. 
\end{proof}

}

\section{Online Set Cover}\label{sec:setcover}In the (weighted) set cover problem, we are given a collection of subsets $\cal S$ of a ground set $U$, where set $S\in {\cal S}$ has weight $w_S$. The goal is to select a collection of sets ${\cal T}\subseteq {\cal S}$ of minimum cost $\sum_{T\in {\cal T}} w_T$ that cover all the elements in $U$, i.e., that satisfies $\cup_{T\in {\cal T}} T = U$. In the {\em online} version of this problem (see \cite{AlonAABN09}), the set of elements in $U$ is not known in advance. In each online step, a new element $u\in U$ is revealed, and the sets in $\cal S$ that contain $u$ are identified. If $u$ is not covered by the sets in the current solution $\cal T$, then the algorithm must {\em augment} $\cal T$ by adding some set containing $u$ to it. 

In the fractional version of the set cover problem, sets can be selected to fractions in $[0, 1]$, i.e., a solution is given by variables $x_S \in [0, 1]$ for all $S\in {\cal S}$. The constraint is that the total fraction of all sets containing each element $u$ must be at least $1$, i.e., $\sum_{S: u\in S} x_S \ge 1$ for every element $u\in U$. The cost of the solution is given by $\sum_{S\in {\cal S}} w_S x_S$. Clearly, this is a special case of the online covering problem in the previous section where each variable $x_i$ represents a set and each constraint $\sum_{i=1}^n a_{ij} x_i \ge 1$ is for an element, where $a_{ij} = 1$ if an only if element $j$ is in set $i$, else $a_{ij} = 0$. 

We define the {\em frequency} of an element to be the number of sets containing it, and denote the maximum frequency of any element by $d$. Note that this coincides with the maximum number of non-zero coefficients in any constraint. The following theorem is an immediate corollary of \Cref{thm:ocp-robust}:

\begin{theorem}
    \label{thm:fracset}
    There is an algorithm for the fractional online set cover problem that produces an online solution whose cost is at most $O(\min\{\ln k \cdot \text{\dynamic}, \ln d\cdot \opt\})$ in the multiple predictions setting with $k$ predictions.
\end{theorem}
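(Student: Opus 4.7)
The plan is to observe that fractional online set cover is an immediate specialization of the online covering problem (\ocp) and then to invoke \Cref{thm:ocp-robust} as a black box. First, I would formalize the reduction: for each set $S\in{\cal S}$ introduce a variable $x_S\in[0,1]$ with cost $c_S = w_S$, and for each arriving element $u\in U$ introduce the constraint $\sum_{S: u\in S} x_S \ge 1$, which has the required \ocp form $\sum_i a_{ij} x_i \ge 1$ with $a_{ij}\in\{0,1\}$. Each of the $k$ predictions for the arriving element is any feasible way of covering $u$, i.e.\ a vector $x_S(j,s)$ with $\sum_{S: u\in S} x_S(j,s)\ge 1$; this is exactly the kind of suggestion \Cref{alg:ocp} consumes.

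Second, I would identify the parameter $d$ for the reduction. By definition, $d$ is the maximum frequency of any element, i.e.\ the maximum over $u\in U$ of $|\{S\in{\cal S}: u\in S\}|$. Under the reduction above, the number of non-zero coefficients in the constraint for element $u$ is exactly its frequency, so $d$ coincides with the parameter of the same name in \Cref{thm:ocp-robust}. Having matched the parameters, the theorem follows by a direct application of \Cref{thm:ocp-robust}: the cost of the produced solution is $O(\min\{\ln k\cdot\dynamic,\, \ln d\cdot\opt\})$, which is exactly the claimed bound.

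Since this is a pure specialization argument, there is essentially no technical obstacle; the only substantive step is making sure the reduction matches parameters cleanly and that \dynamic in the set cover formulation coincides with \dynamic in the \ocp formulation under the bijection between sets and variables. In particular, the minimal solution supported by a set of per-step suggestions $\{x_S(j,s)\}$ in the set cover instance equals the minimal solution supported by the same suggestions in the underlying \ocp instance, so the two benchmarks are literally identical. With that observation the proof reduces to one invocation of \Cref{thm:ocp-robust} and no further calculation is needed.
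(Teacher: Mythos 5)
Your proposal is correct and matches the paper's own argument: the paper likewise treats fractional online set cover as the special case of \ocp with $a_{ij}\in\{0,1\}$ given by set membership, notes that the element frequency coincides with the maximum number of non-zero coefficients per constraint, and obtains the theorem as an immediate corollary of \Cref{thm:ocp-robust}. No gaps.
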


It is interesting to note that when the suggestions are good in the sense that $\dynamic = O(\opt)$, the competitive ratio of $O(\log k)$ in the above theorem is independent of the size of the problem instance. In contrast, for the classical fractional online set cover problem, there is a well-known lower bound of $\Omega(\log d)$ on the competitive ratio. We also note that the competitive ratio in \Cref{thm:fracset} is tight since as we noted earlier, the lower bound instance constructed in \Cref{thm:ocp-lower} is actually an instance of the set cover problem. 

\subsection{Online Rounding and the Integral Set Cover Problem}
Next, we consider the integral set cover problem, i.e., where the variables $x_i$ are required to be integral, i.e., in $\{0, 1\}$ rather than in $[0, 1]$. The following a well-known result on rounding fractional set cover solutions online:
\begin{theorem}[Alon~{\em et al.}~\cite{AlonAABN09}]
    Given any feasible solution to the fractional online set cover problem, there is an online algorithm for finding a feasible solution to the integral online set cover problem whose cost is at most $O(\log m)$ times that of the fractional solution, where $m$ is the number of elements.
\end{theorem}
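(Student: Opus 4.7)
The plan is to apply the standard randomized rounding scheme of Alon \emph{et al.}~\cite{AlonAABN09} using logarithmically many independent thresholds per set. In an offline preprocessing step, for each set $S \in \mathcal{S}$ I would draw $T = \lceil 2 \ln m \rceil$ i.i.d.\ thresholds $\tau_S^{(1)}, \ldots, \tau_S^{(T)}$ uniformly from $[0,1]$, and maintain the integral indicator $y_S = 1$ if and only if the current fractional value $x_S$ satisfies $x_S \ge \min_t \tau_S^{(t)}$. Because the fractional $x_S$ are monotone non-decreasing over time, so are the $y_S$, so this rule is implementable online and never un-selects a previously chosen set.

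First I would bound the expected cost of the rounded solution. By independence of the thresholds,
\[
    \Pr[y_S = 1] = 1 - (1-x_S)^T \le T \cdot x_S,
\]
using the elementary inequality $1 - (1-x)^T \le Tx$ for $x \in [0,1]$. Multiplying by $w_S$ and summing over $S$ gives
\[
    \mathbb{E}\Bigl[\sum_{S} w_S\, y_S\Bigr] \;\le\; T \sum_{S} w_S\, x_S \;=\; O(\log m) \cdot \sum_{S} w_S\, x_S,
\]
which is the desired $O(\log m)$ bound in expectation against the fractional cost.

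Next I would verify feasibility with high probability. When element $u$ arrives, fractional feasibility $\sum_{S \ni u} x_S \ge 1$ together with $1 - x \le e^{-x}$ yields
\[
    \Pr[u \text{ is uncovered}] \;\le\; \prod_{S \ni u}(1-x_S)^T \;\le\; \exp\!\Bigl(-T \sum_{S \ni u} x_S\Bigr) \;\le\; m^{-2},
\]
and a union bound over the $m$ elements shows that every element is simultaneously covered with probability at least $1 - 1/m$.

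The main obstacle is upgrading this high-probability feasibility into a \emph{deterministic} feasibility guarantee while retaining the $O(\log m)$ cost bound, since a naive repair could in principle be arbitrarily expensive on a worst-case instance. The remedy I would use is a \emph{repair step}: whenever an arriving element $u$ is not yet covered by $y$, deterministically buy a cheapest set $S^*_u$ containing $u$. Since $\sum_{S \ni u} x_S \ge 1$, the cheapest such set satisfies $w_{S^*_u} \le \sum_{S \ni u} w_S x_S$ by an averaging argument, so the expected repair cost charged to element $u$ is at most $m^{-2} \sum_{S \ni u} w_S x_S$. Summing over the $m$ elements and swapping the order of summation, the total expected repair cost is at most $m^{-2} \sum_{S} w_S\, x_S\, |S| \le m^{-1} \sum_S w_S\, x_S$, which is dominated by the $O(\log m)\sum_S w_S x_S$ cost of the rounding itself. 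The combined procedure is therefore deterministically feasible online and has expected cost $O(\log m)$ times that of the fractional solution, completing the proof.
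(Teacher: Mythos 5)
The paper never proves this statement---it is quoted from Alon~et~al.---and your proposal is exactly the standard rounding from that work: $\Theta(\log m)$ independent thresholds per set giving a monotone (hence online-implementable) selection rule, the expected-cost bound $\Pr[y_S=1]\le T x_S$, the per-element failure probability $e^{-T}\le m^{-2}$, and a cheapest-set repair step whose expected cost $\le m^{-1}\sum_S w_S x_S$ is absorbed; all of these calculations are correct. The only caveats are minor: your argument yields a randomized algorithm whose \emph{expected} cost is $O(\log m)$ times the fractional cost (Alon~et~al.\ additionally derandomize the threshold rounding via a potential-function argument to make the guarantee deterministic), and setting $T=\lceil 2\ln m\rceil$ presumes the number of elements (or the ground set) is known offline, which is the standard assumption in this model.
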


By applying this theorem on the fractional solution produced by \Cref{thm:fracset}, we get a competitive ratio of $O(\log m\log k)$ for the integral online set cover problem with $k$ predictions against the \dynamic benchmark:
\begin{theorem}
    \label{thm:intset}
    There is an algorithm for the integral online set cover problem that produces an online solution whose cost is at most $O(\min\{\ln m \ln k \cdot \text{\dynamic}, \ln m\ln d\cdot \opt\})$ in the multiple predictions setting with $k$ predictions.
\end{theorem}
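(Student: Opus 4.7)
The plan is a straightforward two-stage composition, treating the two ingredients as black boxes. First, I would run the fractional algorithm guaranteed by \Cref{thm:fracset} on the online stream of elements together with the $k$ suggestions, producing a monotonically non-decreasing fractional cover $\{x_S\}$ whose cost is at most $O(\min\{\ln k\cdot \dynamic,\; \ln d\cdot \opt\})$. Second, I would feed this evolving fractional solution into the online rounding procedure of Alon \textit{et al.}~\cite{AlonAABN09}, which converts any monotone feasible fractional set cover maintained online into an integral feasible cover at a multiplicative cost overhead of $O(\log m)$. Multiplying the two guarantees immediately yields the claimed $O(\min\{\ln m\ln k\cdot \dynamic,\; \ln m\ln d\cdot \opt\})$ bound, since the $\log m$ factor applies uniformly to whichever term achieves the minimum for the underlying fractional cost.

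The only compatibility points that need checking are routine. For monotonicity, \Cref{alg:ocp} only ever \emph{increases} its variables $x_i$ (the final ``multiply by $2$'' step described in \Cref{sec:ocp-algorithm} is also monotone), so the fractional cover presented to the rounding procedure is non-decreasing over time, matching exactly the hypothesis required by the Alon \textit{et al.} scheme. For feasibility, once \Cref{alg:ocp} exits its \textbf{while} loop on constraint $j$ we have $\sum_i a_{ij} x_i \geq \tfrac{1}{2}$; doubling the variables makes each constraint feasible (and loses only a factor of $2$ in cost, already absorbed in the $O(\cdot)$).

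I do not expect any genuine obstacle in this proof: both ingredients are stated and applied in exactly the form needed, and the composition respects the $\min$ inside the bound because the rounding overhead is multiplicative in the \emph{realized} fractional cost rather than in either upper bound individually. The only judgment call is purely expository, namely to state explicitly that the fractional solution produced by \Cref{alg:ocp} is monotone and feasible for every arriving constraint, so that the Alon \textit{et al.} rounding can be invoked as a black box without modification.
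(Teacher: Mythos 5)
Your proposal matches the paper's proof: the paper also obtains \Cref{thm:intset} by simply composing the fractional guarantee of \Cref{thm:fracset} with the $O(\log m)$ online rounding of Alon \emph{et al.}~\cite{AlonAABN09}, with the rounding overhead multiplying the realized fractional cost and hence both terms of the $\min$. Your extra remarks on monotonicity and feasibility of the fractional solution are fine but not spelled out in the paper.
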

It is well-known that the competitive ratio of the integral online set cover problem (without predictions) is at least $\tilde{\Omega}(\log m\log d)$~\cite{AlonAABN09}\footnote{The notation $\tilde{\Omega}$ (and $\tilde{O}$) hide lower order terms.}. In the degenerate case of $k = d$, any instance of online set cover can be generated in the $k$ predictions settings where the benchmark solution \dynamic will be the optimal solution, since all the sets containing an element can be provided as suggestions in each online step. As a consequence, the competitive ratio in \Cref{thm:intset} is tight (up to lower order terms).

\eat{
Below, we show that this bound cannot be unconditionally improved:

\begin{theorem}
Given $k$ experts, the competitive ratio for the integral online set cover problem is at least $O(\log k\cdot \log n)$ for certain values of $k$.
\end{theorem}

\begin{proof}
Consider the case where $k=f$ where $f$ is the maximum number of sets that an element belongs to. In this case, the suggestions will include all the sets that an element can belong to (and therefore, include no extra information vis-a-vis the general online case). We know that there exists a lower bound construction showing that the competitive ratio is greater than $(\log f\cdot \log n)$ for a general online case.
\end{proof}

}
\section{(Weighted) Caching}\label{sec:caching}The caching problem is among the most well-studied online problems (see, e.g., \cite{SleatorT85,FiatKLMSY91,McGeochS91,AchlioptasCN00,young1991online,BlumBK99}, and the textbook~\cite{BorodinE98}).
In this problem, there is a set of $n$ pages and a cache that can hold any $h$ pages at a time.\footnote{The usual notation for cache size is $k$, but we have changed it to $h$ since we are using $k$ to denote the number of suggestions.} In every online step $j$, a page $p_j$ is requested; if this page is not in the cache, then it has to be brought into the cache (called {\em fetching}) by evicting an existing page from the cache. In the weighted version (see, e.g., \cite{chrobak1991new,Young94,BansalBN07}), the cost of fetching a page $p$ into the cache is given by its non-negative weight $w_p$. (In the unweighted version, $w_p = 1$ for all pages.) The goal of a caching algorithm is to minimize the total cost of fetching pages while serving all requests.

The (weighted) caching problem can be formulated as online covering problem by defining variables $x_p(r) \in \{0, 1\}$ to indicate whether page $p$ is evicted between its $r$-th and $(r+1)$-st requests. Let 
$r(p,j)$ denote the number of times page $p$ is requested until (and including) the $j$-th request. For any online step $j$, let $B(j) = \{p: r(p,j) \ge 1\}$ denote the set of pages that have been requested until (and including) the $j$-th request. The covering program formulation is as follows:
\begin{equation*}
\begin{array}{ll@{}ll}
\text{min}  & \sum_p \sum_r w_p\cdot x_p(r) \text{ subject to}&\\
 &\sum_{p \in B(j), p \neq p_j} x_p(r(p,j)) \geq |B(j)|-h,   & ~\forall j \ge 1\\
& x_p(r) \in \{0,1\}, & \forall p, \forall r \ge 1\\
\end{array}
\end{equation*}
In the fractional version of the problem, we replace the constraints $x_p(r)\in \{0, 1\}$ with the constraints $x_p(r)\in [0, 1]$. Clearly, this fits the definition of the online covering problem.\footnote{Strictly speaking, we need to scale the first set of coefficients by $|B(j)|-h$, but as we mentioned earlier, this is equivalent since scaling the coefficients has no bearing on the competitive ratio of our algorithm.} Moreover, for the fractional weighted caching problem, Bansal, Buchbinder, and Naor gave an online algorithm with a competitive ratio of $O(\log h)$:
\begin{theorem}[\cite{BansalBN07}]
    \label{thm:caching-frac}
    There is an online algorithm for the fractional weighted caching problem with a competitive ratio of $O(\log h)$.
\end{theorem}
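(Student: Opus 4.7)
Plan: I would prove the theorem via an online primal--dual algorithm on the LP written just above (equivalently, a potential-based algorithm in the style of \Cref{sec:ocp-algorithm}). The essential feature is a continuous multiplicative-update rule whose additive constant is $\Theta(1/h)$ rather than $\Theta(1/n)$; together with the slack of $|B(j)|-h$ on the right-hand side of each caching constraint, this is what allows the competitive ratio to scale with $\log h$ and not $\log d = \log n$ as one would get by plugging caching into the generic online-covering framework (\Cref{thm:ocp-online}) as a black box.

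Concretely, I would raise the dual variable $y_j$ of the current time step at unit rate while the constraint $\sum_{p\in B(j)\setminus\{p_j\}} x_p(r(p,j))\ge |B(j)|-h$ is violated, and in parallel raise every unfrozen eviction variable $x_p(r(p,j))$ at rate proportional to $\tfrac{1}{w_p}\bigl(x_p(r(p,j))+\tfrac{1}{h}\bigr)$, so that $x_p(r)$ grows like $\tfrac{1}{h}(e^{y/w_p}-1)$ in the cumulative dual charge on its interval. Because $x_p(r)$ freezes at $1$ (and after the $(r{+}1)$-st request to $p$ it no longer appears in any live constraint), the total rise of $y_j$'s that touch a fixed $x_p(r)$ is at most $w_p\ln(h+1)$. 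Hence $y/\ln(h+1)$ is feasible for the dual of the given LP, and LP weak duality yields $\sum_j (|B(j)|-h)\,y_j \le \ln(h+1)\cdot \opt$.

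The second half of the analysis bounds the algorithm's primal cost by $O(1)$ times the dual objective, via a differential calculation in the spirit of \Cref{lem:alg-cost} and \Cref{lem:alg-pot}: during an update the cost rate is $\sum_{p\in A} \bigl(x_p(r)+\tfrac{1}{h}\bigr)$ where $A$ is the set of currently unfrozen active pages, and the goal is to bound this by $O(|B(j)|-h)$. The first summand $\sum_{p\in A} x_p(r)$ is at most $|B(j)|-h$ by violation of the constraint. Charging the second summand is the main obstacle, since naively $|A|/h$ can be as large as $n/h$ and therefore must be amortized against the RHS slack $|B(j)|-h$ rather than estimated pointwise per step. This is precisely where the argument uses the integer $\{0,1\}$ structure of the caching LP and the exact form of the right-hand side, distinguishing it from the black-box covering bound; once this amortization is done, combining with the dual bound above yields the claimed $O(\log h)$ competitive ratio. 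At a high level this reproduces the primal--dual argument of Bansal, Buchbinder, and Naor~\cite{BansalBN07}, and can alternatively be repackaged in the potential language of \Cref{sec:ocp-algorithm} by setting $\delta = 1/h$.
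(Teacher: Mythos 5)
This theorem is not proved in the paper at all: it is imported verbatim from Bansal, Buchbinder, and Naor~\cite{BansalBN07} and used as a black box (only the robustification step, \Cref{thm:fraccache}, is the paper's own contribution). So there is no internal proof to compare against; what you have written is a reconstruction of the cited primal--dual argument, and it is essentially the right one: raise the dual of the live constraint, grow each unfrozen $x_p(r)$ as $\frac{1}{h}\bigl(e^{y/w_p}-1\bigr)$, observe that freezing at $1$ caps the dual charge on any one variable at $w_p\ln(h+1)$ so that $y/\ln(h+1)$ is dual feasible, and bound the primal rate by a constant times the dual rate $|B(j)|-h$. This is exactly why caching gets $O(\log h)$ rather than the $O(\log d)$ of \Cref{thm:ocp-online}: the additive term is $1/h$, not $1/n$, and its total contribution is charged to the slack $|B(j)|-h$.

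The one place you hedge is the step you call the main obstacle, and there you slightly overcomplicate it: no cross-step amortization is needed, because the charge is pointwise. The unfrozen variables in the live constraint all correspond to distinct pages of $B(j)\setminus\{p_j\}$, so their number $|A|$ is at most $|B(j)|-1$, and whenever the constraint is nontrivial (i.e., $|B(j)|\ge h+1$) one has $\frac{|B(j)|-1}{h}\le |B(j)|-h$, since $h\bigl(|B(j)|-h\bigr)-\bigl(|B(j)|-1\bigr)=(h-1)\bigl(|B(j)|-h-1\bigr)\ge 0$. Combined with $\sum_{p\in A}x_p(r(p,j))<|B(j)|-h$ from violation, the primal cost rate is at most $2\bigl(|B(j)|-h\bigr)$, i.e., twice the rate of dual increase, and the $O(\log h)$ ratio follows from the dual feasibility bound. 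Also note that your closing remark about repackaging this in the language of \Cref{alg:ocp} with $\delta=1/h$ needs care: the potential analysis in \Cref{sec:ocp-algorithm} is benchmarked against \dynamic given feasible suggestions, whereas here the comparison is against \opt, so the classical primal--dual (or a potential built from an optimal fractional solution) is the appropriate vehicle, which is what your sketch in fact uses.
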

Note that the competitive ratio of $O(\log h)$ is better than that given by \Cref{thm:ocp-online} since the cache size $h$ is typically much smaller than the total number of pages. Now, we apply \Cref{thm:ocp-robust-gen} to get the following result for fractional weighted caching with $k$ predictions:

\begin{theorem}
    \label{thm:fraccache}
    There is an algorithm for the fractional weighted caching problem that produces an online solution whose cost is at most $O(\min\{\ln k \cdot \text{\dynamic}, \ln h\cdot \opt\})$ in the multiple predictions setting with $k$ predictions.
\end{theorem}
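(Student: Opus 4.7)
The plan is to obtain \Cref{thm:fraccache} as a direct corollary of the robust framework \Cref{thm:ocp-robust-gen}, instantiated with the Bansal--Buchbinder--Naor algorithm of \Cref{thm:caching-frac} as the no-prediction baseline. The first step is to check that the fractional weighted caching problem fits the online covering template: the LP formulation above presents, at each request, a new constraint $\sum_{p\in B(j),\,p\ne p_j} x_p(r(p,j)) \ge |B(j)|-h$. Dividing through by $|B(j)|-h$ rewrites this in the canonical form $\sum_i a_{ij} x_i \ge 1$ with non-negative coefficients; the variables $x_p(r)$ are monotone non-decreasing (once an eviction is made it stands), and the objective is linear with non-negative costs $w_p$. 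Hence the problem is a legitimate \ocp instance.

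Next, I would interpret the $k$ machine-learned predictions as $k$ suggestions in the \ocp sense. At step $j$, each prediction provides a feasible way to satisfy the new constraint, i.e.\ a fractional choice of which pages in $B(j)\setminus\{p_j\}$ to evict to make room for $p_j$. Any source of predictions that yields a candidate caching policy fits this template, since the step-$j$ eviction decisions can be read off directly as an assignment to the constraint's variables. Applying \Cref{alg:ocp} to this instance and invoking \Cref{thm:ocp-upper} yields a fractional caching solution whose cost is at most $O(\log k) \cdot \dynamic$.

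Finally, to robustify, I would invoke \Cref{thm:ocp-robust-gen} with $\alpha = O(\log h)$ supplied by \Cref{thm:caching-frac}. Concretely, the meta-algorithm runs \Cref{alg:ocp} on two aggregated suggestions per online step: the online fractional solution produced above (cost $O(\log k)\cdot\dynamic$) and the online fractional solution produced by the Bansal--Buchbinder--Naor algorithm (cost $O(\log h)\cdot\opt$). By \Cref{thm:ocp-robust-gen}, the output has cost $O(\min\{\log k \cdot \dynamic,\ \log h \cdot \opt\})$, which matches the claimed bound.

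There is essentially no substantive obstacle; the argument is a clean plug-in into the general machinery. The only point that deserves a line of verification is that rescaling constraint coefficients by $|B(j)|-h$ does not affect the \ocp guarantees, which is immediate because \Cref{thm:ocp-upper} and \Cref{thm:ocp-robust-gen} only require non-negativity of the $a_{ij}$ and of the costs, both of which hold in the caching LP.
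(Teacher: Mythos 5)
Your proposal is correct and matches the paper's proof: the paper likewise observes that the caching LP (after rescaling each constraint by $|B(j)|-h$) is an instance of \ocp, and then obtains \Cref{thm:fraccache} by plugging the $O(\log h)$-competitive algorithm of \Cref{thm:caching-frac} into \Cref{thm:ocp-robust-gen}. Your extra verification that the rescaling is harmless is exactly the point the paper relegates to a footnote, so there is no substantive difference.
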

\subsection{Integral (Weighted) Caching}
Next, we consider the integral weighted caching problem, i.e., where the variables $x_p(r)$ have to be in $\{0, 1\}$ rather than $[0, 1]$. We use the following known result about online rounding of fractional weighted caching solutions:

\begin{theorem}[Bansal, Buchbinder, and Naor~\cite{BansalBN07}]
    \label{thm:caching-rounding}
    Given any feasible online solution to the fractional weighted caching problem, there is an online algorithm for finding a feasible online solution to the integral weighted caching problem whose cost is at most $O(1)$ times that of the fractional solution.
\end{theorem}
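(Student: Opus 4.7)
The plan is to design a randomized online rounding that converts the given fractional caching solution into an integer one whose expected cost is $O(1)$ times larger; a standard derandomization via the method of conditional expectations, or an explicit deterministic witness, then yields the stated deterministic guarantee. For each page $p$ and each online step $j$, let $y_p(j) := 1 - x_p(r(p,j))$ denote the fractional indicator that $p$ is in the cache just after step $j$. The feasibility constraints translate to $\sum_{p \neq p_j} y_p(j) \le h-1$ and $y_{p_j}(j) = 1$, so $\mathbf{y}(j)$ always lies in an ``$h$-cache polytope,'' and the fractional fetching cost is $\sum_p w_p \cdot (\text{total upward variation of } y_p)$.

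First I would maintain an integer cache $C_j$ via a coupled-thresholds rounding scheme. Rather than assigning an independent uniform threshold $\alpha_p \in [0,1]$ to each page and declaring $p \in C_j$ iff $\alpha_p \le y_p(j)$ (which achieves marginals $\Pr[p \in C_j] = y_p(j)$ but not the hard cardinality constraint), I would draw a single uniform seed $\Theta \in [0,1]$ and derive correlated thresholds $\alpha_p(j)$ from $\Theta$ together with a canonical ordering of the pages, chosen so that (i) $\Pr[p \in C_j] = y_p(j)$ for every $p$, and (ii) $|C_j|$ equals $\lceil \sum_p y_p(j) \rceil \le h$ with probability one, with the requested page $p_j$ always included. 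This is a cache-polytope analogue of randomized swap or pipage rounding, but driven online by the trajectory of $\mathbf{y}(j)$.

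The cost analysis is then a telescoping charging argument: whenever $y_p$ decreases by $\Delta$ (fractional eviction), the chance that $p$ leaves $C$ is exactly $\Delta$, and whenever $y_p$ increases by $\Delta$ (fractional fetch), the chance that $p$ enters $C$ is at most $\Delta$. Summing $w_p$ times these infinitesimal transition probabilities across the execution bounds the expected weight of integer fetches by the fractional fetching cost, with only an $O(1)$ slack coming from rounding $\sum_p y_p(j)$ up or down to an integer. I expect the main obstacle to be designing the coupling so that both the per-page marginal $\Pr[p \in C_j] = y_p(j)$ and the hard size constraint $|C_j| \le h$ are preserved simultaneously, online, under an adversarial sequence of infinitesimal updates to $\mathbf{y}$. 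Naive independent thresholds break cardinality, while deterministic greedy eviction breaks marginals; the technical heart of \cite{BansalBN07} is exactly the online construction of a joint distribution with both properties, after which the $O(1)$ competitive bound drops out of the telescoping argument above.
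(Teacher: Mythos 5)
First, note that the paper does not prove this statement at all: it is imported verbatim as a known result of Bansal, Buchbinder, and Naor \cite{BansalBN07}, so the only meaningful comparison is against their actual rounding argument. Measured against that, your proposal has two genuine gaps. The first is the derandomization claim. You cannot convert the randomized rounding into a deterministic online one by conditional expectations (or any other means): a deterministic online rounding with $O(1)$ loss, applied to the $O(\log h)$-competitive fractional solution, would yield a deterministic $O(\log h)$-competitive caching algorithm, contradicting the classical lower bound of $h$ on the competitive ratio of every deterministic paging algorithm. The theorem is inherently about a \emph{randomized} integral algorithm whose expected cost is $O(1)$ times the fractional cost, and that is how the present paper uses it.

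The second and more central gap is the coupling itself, which you correctly identify as ``the technical heart'' but then assume into existence. A single-seed, systematic-sampling style threshold scheme does give exact marginals $\Pr[p\in C_j]=y_p(j)$ and cache size $\lceil \sum_p y_p(j)\rceil$, but it does not have the property your charging argument needs: when $y_p$ changes by $\Delta$, the cumulative intervals of all pages after $p$ in the canonical order shift, so pages $q\neq p$ can enter or leave the cache with probability $\Theta(\Delta)$ each, and the expected movement cost of an update is bounded only by $\Delta\cdot\sum_q w_q$ rather than by $w_p\Delta$. More generally, with a hard cardinality constraint enforced in every realization, repairing feasibility after a marginal change can force relocations of heavy pages whose own marginals did not move, so ``marginals correct $+$ size $\le h$'' alone does not imply the telescoping bound. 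This is exactly why Bansal, Buchbinder, and Naor do not round against arbitrary marginal-preserving distributions: they group pages into weight classes (weights rounded to powers of two) and maintain the additional invariant that, within each class, the integral number of evicted pages equals the fractional evicted mass rounded up or down; feasibility repairs then swap pages of comparable weight and can be charged, class by class, to the fractional movement, yielding the $O(1)$ factor. Without the weight-class invariant (or some substitute controlling which pages may be touched during repairs), your charging argument does not go through for general weights, although it is essentially correct for the unweighted case.
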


By applying this theorem on the fractional solution produced by \Cref{thm:fraccache}, we get a competitive ratio of $O(\log k)$ for the integral weighted caching problem with $k$ predictions against the \dynamic benchmark:

\begin{theorem}
    \label{thm:intcache}
    There is an online algorithm for the integral weighted caching problem that produces an online solution whose cost is at most $O(\min\{\ln k \cdot \text{\dynamic}, \ln h\cdot \opt\})$ in the multiple predictions setting with $k$ predictions.
\end{theorem}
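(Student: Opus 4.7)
The plan is to simply compose the two results cited immediately before the theorem: the fractional algorithm guaranteed by \Cref{thm:fraccache} and the online rounding scheme from \Cref{thm:caching-rounding}. The proof should be a short one-paragraph argument, since essentially no new ideas are required.

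More concretely, I would first run the fractional weighted caching algorithm provided by \Cref{thm:fraccache} on the given request sequence together with the $k$ predictions. This produces an online fractional solution $\{x_p(r)\}$ to the covering formulation of weighted caching whose total cost is at most $O(\min\{\ln k \cdot \dynamic, \ln h \cdot \opt\})$, where both benchmarks refer to the integral optima (or, more precisely, to their fractional relaxations, which only lowers their values). I would then feed this fractional solution, step by step, into the online rounding procedure of \Cref{thm:caching-rounding}, which at every moment in time maintains a feasible integral cache configuration whose cumulative fetching cost is at most a constant times the cost of the fractional solution seen so far.

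Composing these two bounds, the resulting integral algorithm pays at most $O(1)$ times the fractional cost, which is itself at most $O(\min\{\ln k \cdot \dynamic, \ln h \cdot \opt\})$. Since $\min$ commutes with multiplication by a positive constant, the integral cost is bounded by $O(\min\{\ln k \cdot \dynamic, \ln h \cdot \opt\})$, as required.

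There is no real obstacle here; the only point worth double-checking is that the rounding of \Cref{thm:caching-rounding} is truly \emph{online} and operates on a fractional solution revealed incrementally, so that it can be plugged on top of our online fractional algorithm without any lookahead. This is the content of \cite{BansalBN07}, and it is what lets us carry the min-of-two-benchmarks guarantee through the rounding step unchanged (up to the absorbed constant factor).
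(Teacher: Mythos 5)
Your proposal is correct and matches the paper's own argument exactly: the paper also derives \Cref{thm:intcache} by running the fractional algorithm of \Cref{thm:fraccache} and feeding its output into the $O(1)$-loss online rounding of \Cref{thm:caching-rounding}, so the min-of-two-benchmarks bound survives up to a constant factor. No differences worth noting.
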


\section{Online Covering with Box Constraints}\label{sec:box}In this section, we generalize the online covering framework in \Cref{sec:ocp} by allowing additional {\em box} constraints of the form $x_{ij} \le y_i$. The new linear program is given by:
\begin{align}
\text{Minimize } \sum_{i=1}^n c_i y_i & + \sum_{i=1}^{n}\sum_{j=1}^{m} d_{ij} x_{ij} \text{ subject to} \nonumber\\
   x_{ij} &\leq y_{i} \quad \forall i, j \label{eq:fac1}\\
   \sum_{i=1}^{n} a_{ij} x_{ij} &= 1 \quad \forall j \label{eq:fac2} \\
    y_i &\in [0, 1] \quad \forall i
\end{align}
The box constraints $x_{ij} \le y_i$ do not have coefficients and hence are known offline. As in \ocp, the cost coefficients $c_i$ are known offline. In each online step, a new covering constraint $\sum_{i=1}^n a_{ij} x_{ij} \ge 1$ is revealed to the algorithm, and the corresponding cost coefficient $d_{ij}$ is also revealed.

We first note that this is a generalization of the online covering problem. To see this, set $d_{ij} = 0$. Then, we get:
\[
    \min_{y_i\in [0, 1]: i\in [n]}  \left\{\sum_{i=1}^n c_i y_i: \sum_{i=1}^n a_{ij} y_i \ge 1 ~\forall j\in [m]\right\},
\]
which is precisely the online covering problem.

The more general version captures problems like facility location (shown in the next section) that are not directly modeled by \ocp. We denote the $s$-th suggestion for the $j$-th constraint by variables $y_i(j, s)$ and $x_{ij}(s)$; they satisfy $\sum_{i=1}^n a_{ij} x_{ij}(s) \ge 1$, i.e., all suggestions are feasible.

\subsection{Online Algorithm}

Our algorithm for \ocp with box constraints is given in~\Cref{alg: Dynamic_OPT_fac_loc}. As earlier, 
for all $i$, we simultaneously raise $x_{ij}$ (and possibly $y_i$ as well) at the rate specified by the algorithm. As in the case of \ocp, we raise these variables only till the constraint is satisfied up to a factor of $\frac12$, and any individual variable does not cross $\frac12$. This allows us to double all variable and satisfy the constraints at an additional factor of $2$ in the cost. Moreover, the same argument as in \Cref{alg:ocp} implies that this algorithm is also feasible, i.e., there is at least one variable that can be increased in the {\bf while} loop.

\begin{algorithm}[H]
\caption{Algorithm for Online \fac}
\label{alg: Dynamic_OPT_fac_loc}
On arrival of a new constraint $\sum_{i=1}^n a_{ij} x_{ij} = 1$: \\
\hspace*{20pt} \textbf{Initialize} $x_{ij} = 0, \quad \forall j$. \\
\hspace*{20pt} Set $\Gamma_{ij} := \sum_{s=1}^{k} x_{ij}(s), \quad \forall j$ and $\delta = \frac1k$. \\
\hspace*{50pt} \textbf{while $\sum_j x_{ij} < \frac{1}{2}$}  \\
\hspace*{80pt} \textbf{for all} $i$ such that $x_{ij} < \frac12$: \\
\hspace*{100pt} \textbf{if} $x_{ij} < y_i$ \\
\hspace*{120pt} Increase  $x_{ij}$ at the rate $\frac{\partial x_{ij}}{\partial t} = \left(\frac{a_{ij}}{d_{ij}}\right)\cdot\left(x_{ij} + \delta\cdot \Gamma_{ij}\right)$  \\
\hspace*{100pt} \textbf{else} \\
\hspace*{120pt} Increase both variables $x_{ij}, y_i$ at the same rate \\
\hspace*{130pt} $\frac{\partial y_i}{\partial t} = \frac{\partial x_{ij}}{\partial t} = \left(\frac{a_{ij}}{d_{ij}+c_i}\right)\cdot\left(x_{ij} + \delta\cdot \Gamma_{ij}\right) $.\\
\end{algorithm}

\subsection{Analysis}
In this section, we analyze~\Cref{alg: Dynamic_OPT_fac_loc}. 
We first bound the rate of increase of the cost of the algorithm. 
\begin{lemma}
\label{lem:costfac}
The rate of increase of the cost for~\Cref{alg: Dynamic_OPT_fac_loc} is at most $\frac32$.
\end{lemma}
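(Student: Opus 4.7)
The plan is to imitate the proof of \Cref{lem:alg-cost} almost verbatim, once we verify that the rate assignments in the two branches of the \textbf{if}/\textbf{else} inside \Cref{alg: Dynamic_OPT_fac_loc} have been calibrated so that each index $i$ contributes the same amount $a_{ij}(x_{ij}+\delta\Gamma_{ij})$ to $\tfrac{d\text{cost}}{dt}$, regardless of which branch is taken. This is precisely the reason the denominators are $d_{ij}$ in one case and $d_{ij}+c_i$ in the other.

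First I would differentiate the total cost $\sum_i c_i y_i + \sum_i\sum_{j'} d_{ij'} x_{ij'}$ with respect to time. Since only variables indexed by the current online step $j$ are being modified, the rate equals $\sum_i c_i \tfrac{dy_i}{dt} + \sum_i d_{ij}\tfrac{dx_{ij}}{dt}$. I would then split the sum over $i$ according to which branch of the algorithm applies to $i$ at that instant. If $x_{ij}<y_i$, then $\tfrac{dy_i}{dt}=0$ and $d_{ij}\tfrac{dx_{ij}}{dt}=a_{ij}(x_{ij}+\delta\Gamma_{ij})$. If $x_{ij}=y_i$, both variables move at the same rate, and $(c_i+d_{ij})\tfrac{dx_{ij}}{dt}=a_{ij}(x_{ij}+\delta\Gamma_{ij})$. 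Either way, the per-index contribution is $a_{ij}(x_{ij}+\delta\Gamma_{ij})$.

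Next I would sum over $i$ and apply exactly the calculation from \Cref{lem:alg-cost}: expand $\Gamma_{ij}=\sum_{s=1}^k x_{ij}(s)$, swap the order of summation, invoke the tightness assumption (which still holds because we may WLOG make the suggestions tight as in \Cref{eq:tight}) to replace $\sum_i a_{ij} x_{ij}(s)$ by $1$, and invoke the \textbf{while}-loop guard $\sum_i a_{ij}x_{ij}<\tfrac12$. This yields
\begin{align*}
\frac{d\text{cost}}{dt}
&= \sum_i a_{ij}x_{ij}+\frac{1}{k}\sum_{s=1}^{k}\sum_i a_{ij}x_{ij}(s)
< \tfrac12 + 1 = \tfrac32.
\end{align*}

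The only substantive point beyond the \ocp proof is the case analysis showing the identical per-index contribution, which is really just a sanity check that the rate formulas in the algorithm are set up so the two branches give a uniform accounting; once that is in hand, the argument is routine. I do not anticipate any obstacle.
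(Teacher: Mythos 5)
Your proposal is correct and follows essentially the same argument as the paper: the same two-branch case analysis showing each index contributes $a_{ij}(x_{ij}+\delta\Gamma_{ij})$ to the cost rate, followed by the same summation using the \textbf{while}-loop guard and the (tight) feasibility of the suggestions to get $\frac12+1=\frac32$. Your explicit remark that tightness can be assumed WLOG is a minor point of extra care, not a different route.
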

\begin{proof}
Consider~\Cref{alg: Dynamic_OPT_fac_loc} when the constraint $\sum_{i=1}^n a_{ij} x_{ij} =1$ arrives. 
For any $i$, we claim:
\begin{align}
    \label{eq:proof1}
    d_{ij}\cdot \frac{\partial x_{ij}}{\partial t}+ c_i\cdot \frac{\partial y_i}{\partial t} =
    a_{ij} \left(x_{ij} + \delta\cdot \Gamma_{ij}\right). 
\end{align}
To show this, there are two cases to consider:
\begin{itemize}
    \item $x_{ij} < y_i$: In this case, $\frac{\partial y_i}{\partial t} = 0$, and  $d_{ij}\cdot \frac{\partial x_{ij}}{\partial t} 
    = a_{ij} \left(x_{ij} + \delta\cdot \Gamma_{ij}\right)$ and so~\eqref{eq:proof1} holds. 
    \item $x_{ij} = y_i$: In this case, $\frac{\partial y_i}{\partial t} 
    = \frac{\partial x_{ij}}{\partial t} 
    = \frac{a_{ij}}{d_{ij}+c_i} \cdot \left(x_{ij} + \frac{\Gamma_{ij}}{k}\right)$, i.e.,
    $c_i\cdot \frac{\partial y_i}{\partial t} + d_{ij}\cdot \frac{\partial x_{ij}}{\partial t} 
    = a_{ij} \left(x_{ij} + \delta\cdot \Gamma_{ij}\right)$,
    and so~\eqref{eq:proof1} holds. 
\end{itemize}
Therefore, the rate of change of the objective function is given by:
\[
    \sum_{i}\left(d_{ij}\cdot \frac{\partial x_{ij}}{\partial t}+ c_i\cdot \frac{\partial y_i }{\partial t}\right) 
    \stackrel{\eqref{eq:proof1}}{=} \sum_{i} a_{ij} \left(x_{ij} + \delta\cdot \Gamma_{ij}\right) 
    = \sum_{i}a_{ij} x_{ij} + \frac{\sum_{s} \sum_{i} a_{ij} x_{ij}(s)}{k}
    \le \frac12 + 1
    = \frac32.\qedhere
\]
\end{proof}

We now describe the potential function, which has a similar structure as that in the online covering framework. 
Let $\xd_{ij}, \yd_i$ denote the values of the variables $x_{ij}, y_i$ respectively in the benchmark solution $\dynamic$. The potential function for a variable $x_{ij}$ is then defined as follows:
\[
    \phi_{ij} = d_{ij} \cdot \xd_{ij} \cdot \ln \frac{(1+\delta) \xd_{ij}}{x_{ij} + \delta \xd_{ij}}, \text{ where } \delta = \frac1k,
\]
and the potential for the variable $y_i$ is given by 
$$  \psi_i = c_i \cdot \yd_i \cdot \ln \frac{(1+\delta) \yd_i}{y_i + \delta \yd_i}.$$
As before, the overall potential is 
\[
    \phi = \sum_{i,j: \xd_{ij}\ge x_{ij}} \phi_{ij} +\sum_{i: \yd_i\ge y_i} \psi_i .
\]
The rest of the proof proceeds along the same lines as that for the online covering problem. But we give the details for the sake of completeness.

The next lemma is the analogue of~\Cref{lem:pot-nonneg}, and shows that the potential $\phi$ is always non-negative. 
\begin{lemma}
\label{lem:pot-nonneg1}
    For any values of the variables $x_{ij},y_i,\xd_{ij}, \yd_i$, the potential function $\phi$ is non-negative.
\end{lemma}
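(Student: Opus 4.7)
The plan is to mimic the proof of \Cref{lem:pot-nonneg} termwise, once for the $\phi_{ij}$ summands and once for the $\psi_i$ summands, and then use the fact that $\phi$ is a sum of non-negative quantities.

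First I would handle a generic $\phi_{ij}$ in the sum. Since this term is only included when $\xd_{ij} \ge x_{ij}$, we may divide numerator and denominator of the argument of the logarithm by $\xd_{ij}$ to rewrite
\[
    \phi_{ij} = d_{ij} \cdot \xd_{ij} \cdot \ln \frac{1+\delta}{\frac{x_{ij}}{\xd_{ij}} + \delta}.
\]
The inequality $x_{ij} \le \xd_{ij}$ gives $\frac{x_{ij}}{\xd_{ij}} + \delta \le 1 + \delta$, so the argument of the logarithm is at least $1$, and hence $\phi_{ij} \ge 0$ (using also that $d_{ij}, \xd_{ij} \ge 0$).

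Next I would repeat the identical calculation for $\psi_i$: whenever $\yd_i \ge y_i$, the same manipulation gives $\psi_i = c_i \cdot \yd_i \cdot \ln \frac{1+\delta}{\frac{y_i}{\yd_i} + \delta} \ge 0$, using $c_i, \yd_i \ge 0$.

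Finally, since $\phi$ is defined as the sum of these non-negative terms, $\phi \ge 0$ follows immediately. There is no real obstacle here: the lemma is a routine verification, and the only subtlety is to confirm that the indicator conditions ($\xd_{ij} \ge x_{ij}$ and $\yd_i \ge y_i$) built into the definition of $\phi$ are exactly the conditions needed to make each logarithm non-negative, which they are by construction.
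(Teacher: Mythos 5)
Your proposal is correct and matches the paper's own proof essentially verbatim: both arguments show each summand $\phi_{ij}$ (when $\xd_{ij} \ge x_{ij}$) and $\psi_i$ (when $\yd_i \ge y_i$) is non-negative by rewriting the logarithm's argument as $\frac{1+\delta}{\frac{x_{ij}}{\xd_{ij}}+\delta}$ (resp.\ the analogous expression for $y_i$) and noting it is at least $1$. Nothing further is needed.
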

\begin{proof}
We show that each of the quantities $\phi_{ij}, \psi_i$ in the expression for $\phi$ is non-negative. Consider a pair $i,j$ for which $\xd_{ij} \geq x_{ij}$. Then
       $$ \phi_{ij} = d_{ij} \cdot \xd_{ij} \cdot \ln \frac{(1+\delta) \xd_{ij}}{x_{ij} + \delta \xd_{ij}} 
        = d_{ij} \cdot \xd_{ij} \cdot \ln \frac{1+\delta}{\frac{x_{ij}}{\xd_{ij}} + \delta} \ge 0.
    $$
    Similarly, $\psi_i \geq 0$ if $\yd_i \geq y_i$. This shows that $\phi \geq 0$. 
    \qedhere
\end{proof}

Now we bound the potential against the benchmark solution \dynamic. The proof of this lemma is very similar to that of~\Cref{lem:opt-pot}. 

\begin{lemma}
\label{lem:opt-pot1}
    The following bounds hold: $\phi_{ij} \le d_{ij}\cdot \xd_{ij} \cdot \ln (1+\frac{1}{\delta}) = d_{ij} \xd_{ij} \cdot O(\log k)$ and $\psi_i \le c_i \cdot \yd_i \cdot \ln (1+\frac{1}{\delta})) = c_i \yd_i \cdot O(\log k)$. As a consequence, the overall potential $\phi \le O(\log k)\cdot \left(\sum_i \sum_j d_{ij} \xd_{ij} + \sum_j c_i \yd_i\right)$.
\end{lemma}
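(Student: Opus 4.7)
My plan is to mirror the proof of \Cref{lem:opt-pot} essentially verbatim, term by term, since the potential $\phi$ here is just a sum of copies of the same functional form used for \ocp (one family for the $x_{ij}$ variables with cost $d_{ij}$ and benchmark $\xd_{ij}$, and an analogous family for the $y_i$ variables with cost $c_i$ and benchmark $\yd_i$). The key observation is that each summand is non-decreasing as the current variable (either $x_{ij}$ or $y_i$) decreases, so the maximum is attained when the current variable is $0$.

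First I would bound $\phi_{ij}$. Using $x_{ij} \ge 0$ in the denominator gives $x_{ij} + \delta \xd_{ij} \ge \delta \xd_{ij}$, and hence
\[
    \phi_{ij} \;=\; d_{ij}\cdot \xd_{ij}\cdot \ln\frac{(1+\delta)\xd_{ij}}{x_{ij}+\delta\xd_{ij}} \;\le\; d_{ij}\cdot \xd_{ij}\cdot \ln\frac{(1+\delta)\xd_{ij}}{\delta\xd_{ij}} \;=\; d_{ij}\,\xd_{ij}\cdot \ln\!\left(1+\tfrac{1}{\delta}\right),
\]
and since $\delta = 1/k$, this equals $d_{ij}\,\xd_{ij}\cdot \ln(1+k) = d_{ij}\,\xd_{ij}\cdot O(\log k)$. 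The identical calculation with $y_i$ in place of $x_{ij}$ and $(c_i,\yd_i)$ in place of $(d_{ij},\xd_{ij})$ yields $\psi_i \le c_i\,\yd_i\cdot \ln(1+1/\delta) = c_i\,\yd_i\cdot O(\log k)$.

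For the consequence, I would recall the definition
\[
    \phi \;=\; \sum_{i,j:\,\xd_{ij}\ge x_{ij}} \phi_{ij} \;+\; \sum_{i:\,\yd_i\ge y_i} \psi_i,
\]
and simply sum the per-term bounds, also using \Cref{lem:pot-nonneg1} to note that dropping the restrictions $\xd_{ij}\ge x_{ij}$ and $\yd_i\ge y_i$ only adds non-negative terms so the inequality is preserved; this gives
\[
    \phi \;\le\; O(\log k)\cdot\left(\sum_{i}\sum_{j} d_{ij}\,\xd_{ij} \;+\; \sum_i c_i\,\yd_i\right),
\]
which is the desired bound.

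I do not expect a real obstacle: the argument is purely algebraic and the only subtlety is the harmless step of enlarging the index sets in the sum to get a clean right-hand side expressed in terms of the total benchmark cost, which is justified by the non-negativity of $\phi_{ij}$ and $\psi_i$ whenever they are included in $\phi$.
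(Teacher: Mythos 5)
Your proof is correct and matches the paper's own argument: both bound each summand by replacing the current variable with $0$ in the denominator, yielding the ratio $(1+\delta)/\delta = 1+\nicefrac{1}{\delta}$, and then sum (the paper states the consequence without further comment, and your enlargement of the index sets is the obvious justification, needing only that $d_{ij}\xd_{ij}$ and $c_i\yd_i$ are non-negative). No issues.
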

\begin{proof}
We have 
\[
    \phi_{ij} 
    = d_{ij} \xd_{ij} \cdot \ln \frac{(1+\delta) \xd_{ij}}{x_i + \delta \xd_{ij}}
    \le d_{ij} \xd_{ij} \cdot \ln \frac{(1+\delta) \xd_{ij}}{\delta \xd_{ij}}
    = d_{ij} \xd_{ij} \cdot \ln \left(1+\frac{1}{\delta}\right)
    = d_{ij} \xd_{ij} \cdot O(\log k).
\]    
The bound for $\psi_i$ also follows similarly.
\end{proof}

Finally, we bound the rate of decrease of potential $\phi$ with increase in the variables in \Cref{alg: Dynamic_OPT_fac_loc}. 
\begin{lemma}
\label{lem:alg-pot1} 
    The rate of decrease of the potential $\phi$ in \Cref{alg: Dynamic_OPT_fac_loc} is at least $\frac12$.
\end{lemma}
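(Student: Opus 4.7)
The plan is to mirror the structure of the proof of Lemma~\ref{lem:alg-pot} from the \ocp setting: fix the current constraint $j$, compute $\frac{d\phi}{dt}$ coordinate-by-coordinate, show that each index $i\in I_j:=\{i: x_{ij}(s(j))\ge x_{ij}\}$ contributes at least $a_{ij}\,x_{ij}(s(j))$ to $-\frac{d\phi}{dt}$ (where $s(j)$ is a supported suggestion, i.e., $\xd_{ij}\ge x_{ij}(s(j))$ and $\yd_i\ge y_i(j,s(j))$), and then conclude by combining feasibility of the suggestion with the $\textbf{while}$-loop condition. The new ingredient relative to the \ocp proof is that \Cref{alg: Dynamic_OPT_fac_loc} has two update rules, so each coordinate must be analyzed under both.

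First, I would compute the per-coordinate rate of change of $\phi_{ij}+\psi_i$ using $\frac{d\phi_{ij}}{dx_{ij}}=-d_{ij}\cdot\frac{\xd_{ij}}{x_{ij}+\delta\xd_{ij}}$ and $\frac{d\psi_i}{dy_i}=-c_i\cdot\frac{\yd_i}{y_i+\delta\yd_i}$, together with the two possible update rates. In the decoupled branch $x_{ij}<y_i$, only $\phi_{ij}$ evolves, and a direct substitution yields $-\frac{d\phi_{ij}}{dt}=a_{ij}\xd_{ij}\cdot\frac{x_{ij}+\delta\Gamma_{ij}}{x_{ij}+\delta\xd_{ij}}$, identical to the \ocp analysis. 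In the coupled branch $x_{ij}=y_i$, both $\phi_{ij}$ and $\psi_i$ change at the shared rate $\frac{a_{ij}}{d_{ij}+c_i}(x_{ij}+\delta\Gamma_{ij})$; the key observation is that the map $z^{\star}\mapsto z^{\star}/(z+\delta z^{\star})$ is monotonically increasing in $z^{\star}$, and the feasibility of \dynamic forces the box constraint $\yd_i\ge\xd_{ij}$, so with $y_i=x_{ij}$ we obtain $\frac{\yd_i}{y_i+\delta\yd_i}\ge\frac{\xd_{ij}}{x_{ij}+\delta\xd_{ij}}$. Plugging this into the combined derivative makes the weights $d_{ij}$ and $c_i$ collapse to $d_{ij}+c_i$, exactly cancelling the denominator in the rate and yielding the same bound $-\frac{d(\phi_{ij}+\psi_i)}{dt}\ge a_{ij}\xd_{ij}\cdot\frac{x_{ij}+\delta\Gamma_{ij}}{x_{ij}+\delta\xd_{ij}}$ as in the decoupled case.

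Finally, for each $i\in I_j$, the chain $\xd_{ij}\ge x_{ij}(s(j))\ge x_{ij}$ ensures $\phi_{ij}$ is in $\phi$, and in the coupled branch $\yd_i\ge\xd_{ij}\ge x_{ij}=y_i$ further justifies including $\psi_i$ in $\phi$. A case split on whether $\Gamma_{ij}\ge\xd_{ij}$ or $\Gamma_{ij}<\xd_{ij}$, exactly as in Lemma~\ref{lem:alg-pot}, upgrades the uniform bound to $-\frac{d(\phi_{ij}+\psi_i)}{dt}\ge a_{ij}\,x_{ij}(s(j))$ for every $i\in I_j$. Summing over $i\in I_j$, using feasibility $\sum_i a_{ij}x_{ij}(s(j))\ge 1$ together with the loop condition $\sum_i a_{ij}x_{ij}<\tfrac12$ (which implies $\sum_{i\notin I_j} a_{ij}x_{ij}(s(j))< \sum_{i\notin I_j} a_{ij}x_{ij}<\tfrac12$), yields $\sum_{i\in I_j} a_{ij}x_{ij}(s(j))>\tfrac12$, completing the proof. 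The main obstacle is the coupled branch: one must verify that splitting the update rate between $x_{ij}$ and $y_i$ (denominator $d_{ij}+c_i$ instead of $d_{ij}$) does not lose a factor, and this is exactly what the monotonicity plus benchmark-box-constraint argument delivers.
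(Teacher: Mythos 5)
Your proposal is correct and follows essentially the same route as the paper's proof: the same per-coordinate derivative computation with the two-branch case analysis, the same use of the benchmark's box constraint $\yd_i \ge \xd_{ij}$ (with $y_i = x_{ij}$) to make the coupled-update denominator $d_{ij}+c_i$ cancel, the same $\Gamma_{ij}$ versus $\xd_{ij}$ case split, and the same feasibility-plus-loop-condition finish. The only cosmetic difference is that you sum over $I_j=\{i: x_{ij}(s(j))\ge x_{ij}\}$ (as in Lemma~\ref{lem:alg-pot}) while the paper uses $I=\{i:\xd_{ij}\ge x_{ij}\}$, which changes nothing since the omitted potential terms have non-positive derivatives.
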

\begin{proof}
It is easy to check that 
\begin{equation}\label{eq:phix1}
    \frac{\partial \phi_{ij}}{\partial x_{ij}} = - \frac{d_{ij} \xd_{ij}}{x_{ij}+\delta \xd_{ij}}, \quad \frac{\partial \psi_i}{\partial y_i} = - \frac{c_i \yd_i}{y_i+\delta \yd_i}.
\end{equation}

Consider the step when the $j$-th constraint arrives. We claim that for any index $i \in [n]$, 
\begin{align}
    \label{eq:phix2}
    \frac{\partial \left( \phi_{ij} + \psi_i \right)}{\partial t} \leq - a_{ij} \xd_{ij} \cdot \frac{x_{ij} + \delta \Gamma_{ij}}{x_{ij} + \delta \xd_{ij}}.
\end{align}

To prove this, we consider two cases: 
\begin{itemize}
    \item $x_{ij} < y_i$: In this case, $\frac{\partial x_{ij}}{\partial t} = \frac{a_{ij}}{d_{ij}} \cdot \left( x_{ij} + \delta \Gamma_{ij} \right), \frac{\partial y_i}{\partial t} = 0$. Combining this with~\eqref{eq:phix1}, we see that 
    $$ \frac{\partial \left( \phi_{ij} + \psi_i \right)}{\partial t} = \frac{\partial \phi_{ij}}{\partial t} = \frac{\partial \phi_{ij}}{\partial x_{ij}} \cdot \frac{\partial x_{ij}}{\partial t} 
    = -a_{ij} \xd_{ij} \cdot \frac{x_{ij} + \delta \Gamma_{ij}}{x_{ij} + \delta \xd_{ij}}.$$
    \item $x_{ij} = y_i$: In this case, $\frac{\partial x_{ij}}{\partial t} = \frac{\partial y_i}{\partial t} = \frac{a_{ij}}{d_{ij}+c_i} \cdot \left( x_{ij} + \frac{\Gamma_{ij}}{k} \right). $
    Using~\eqref{eq:phix1} again and the fact that $y_i = x_{ij}$, we see that 
    $$ \frac{\partial \left( \phi_{ij} + \psi_i \right)}{\partial t} = \frac{\partial \phi_{ij}}{\partial x_{ij}} \cdot \frac{\partial x_{ij}}{\partial t} + \frac{\partial \psi_i}{\partial y_i} \cdot \frac{\partial y_i}{\partial t} 
    = - \frac{a_{ij}}{c_i + d_{ij}} \left( \frac{d_{ij} \xd_{ij}}{x_{ij} + \delta \xd_{ij}} + \frac{c_i \yd_i}{x_{ij} + \delta \yd_i}\right)  \cdot  \left( x_{ij} + \delta \Gamma_{ij} \right).$$
    Since $\yd_i \geq \xd_{ij}, \frac{\yd_i}{x_{ij} + \delta \yd_i} \geq \frac{\xd_{ij}}{x_{ij} + \delta \xd_{ij}}.$
Therefore, the RHS above is at most $ - a_{ij} \xd_{ij} \cdot \frac{x_{ij} + \delta \Gamma_{ij}}{x_{ij} + \delta \xd_{ij}}.$    
\end{itemize}

Thus, we have shown that inequality~\eqref{eq:phix2} always holds. 
Now, we have two cases:
\begin{itemize}
    \item  $\Gamma_{ij} \ge \xd_{ij}$: Inequality~\eqref{eq:phix2} implies that 
    \[
        \frac{\partial (\phi_{ij}+ \psi_i)}{\partial t} \le -a_{ij} \xd_{ij}.
    \]
    \item $\Gamma_{ij} < \xd_{ij}$: Using~\eqref{eq:phix2} again, we see that 
    \[
        \frac{d (\phi_{ij}+\psi_i)}{dt} 
        \leq - a_{ij} \xd_{ij} \cdot \frac{x_{ij} + \delta\cdot \Gamma_{ij}}{x_{ij} + \delta \xd_{ij}}
        = - a_{ij} \Gamma_{ij}\cdot \frac{\frac{\xd_{ij}}{\Gamma_{ij}}\cdot x_{ij} + \delta\cdot \xd_{ij}}{x_{ij} + \delta \xd_{ij}}
        \leq - a_{ij} \Gamma_{ij}.
    \]
\end{itemize}
We know that at least one of the suggestions in the $i$-th step is supported by the \dynamic benchmark. Let $s(j) \in [k]$ be the index of such a supported suggestion. Then,
\begin{align*}
    \Gamma_{ij} &= \sum_{s=1}^k x_{ij}(s) \geq x_{ij}(s(j)), \text{ and }\\
    \xd_{ij} &\ge x_{ij} (s(j)) \text{ since } s(j) \text{ is supported by \dynamic}.
\end{align*}
Therefore, in both cases above, we get
\[
    \frac{\partial (\phi_{ij}+ \psi_i)}{\partial t} \le - a_{ij}\cdot x_{ij}(s(j)).
\]
%
Let $I$ denote the index set $\{i\in [n]: \xd_{ij} \geq x_{ij}\}$ (here $j$ is fixed). 
We claim that the 
total decrease in potential satisfies:
\begin{align}
\label{eq:decpot}
    \frac{\partial \phi}{\partial t} \leq -\sum_{i \in I} a_{ij}\cdot x_{ij}(s(j)).
\end{align}
To see this, let $I'$ denote the index set $\{i: \yd_i \geq y_i \}$. Then the term in $\phi$ corresponding to step $j$ is $\phi_j := \sum_{i \in I} \phi_{ij} + \sum_{i \in I'} \psi_i. $ First consider an index $i \in I$ for which $x_{ij} < y_i$. In this case, we know that $\frac{\partial \psi_i}{\partial t} = 0$, and so irrespective of whether $i$ belongs to $I'$ or not, the rate of change of the terms in $\phi_j$ corresponding to $i$ is 
$$ \frac{\partial (\phi_{ij} + \psi_i)}{\partial t} \leq -a_{ij}\cdot x_{ij} (s(j)). $$ 
Now consider an index $i \in I$ for which $x_{ij} = y_i$. Since $\yd_i \geq \xd_{ij}$, it follows that $\yd_i \geq y_i$ and so $i \in I'$ as well. Therefore, the rate of change of the terms in $\phi_j$ corresponding to $i$ is equal to 
$$\frac{\partial (\phi_{ij} + \psi_i)}{\partial t} \leq -a_{ij}\cdot x_{ij} (s(j)). $$
Finally, consider an index $i \in I' \setminus I$. It is easy to verify that $\frac{\partial \psi_i}{\partial t} \leq 0$. Thus, inequality~\eqref{eq:decpot} follows.

The rest of the argument proceeds as in the proof of~\Cref{lem:alg-pot}.
By feasibility of the $s(j)$-th suggestion in step $j$, we have:
\begin{align*}
    \sum_j  a_{ij} x_{ij}(s(j)) &\ge 1\\
    \text{i.e., } \sum_{i \in I} a_{ij} x_{ij} (s(j)) + \sum_{i \notin I} a_{ij} x_{ij}(s(j)) &\ge 1\\
    \text{i.e., } \sum_{i \in I} a_{ij} x_{ij} (s(j)) + \sum_{i \notin I} a_{ij} x_{ij} &> 1 \quad \quad \left(\text{since } i \notin I, \text{ we have } x_{ij} > \xd_{ij} \geq x_{ij}(s(j))\right)\\
     \text{i.e., } \sum_{i \in I} a_{ij} x_{ij}(s(j)) + \sum_{i=1}^n a_{ij} x_{ij} &> 1 \\
     \text{i.e., } \sum_{i \in I} a_{ij} x_{ij}(s(j))  &> \frac12 \quad \left(\text{since } \sum_{i=1}^n x_{ij} < \frac12 \text{ in \Cref{alg: Dynamic_OPT_fac_loc}}\right).
\end{align*}
The desired result now follows from~\eqref{eq:decpot}. 
\end{proof}

We now combine these lemmas to show the following result:
\begin{theorem}
    \label{thm:box}
    There is an algorithm for the online covering problem with box constraints that produces an online solution whose cost is at most $O(\log k) \cdot \dynamic$ in the multiple predictions setting with $k$ predictions.
\end{theorem}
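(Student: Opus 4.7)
The plan is to combine Lemmas 5.2--5.5 in exactly the same way that Theorem 2.1 was deduced from Lemmas 3.2--3.5, with the one wrinkle that we now track a potential that aggregates contributions from both the $x_{ij}$ and $y_i$ variables. Fix an optimal dynamic benchmark with variables $\xd_{ij}, \yd_i$, and consider the evolution of the algorithm's variables starting from $x_{ij} = y_i = 0$. Lemma 5.4 then bounds the initial value of the potential $\phi$ by $O(\log k) \cdot \dynamic$, and Lemma 5.3 guarantees that $\phi$ stays non-negative throughout the execution.

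Next, I would argue that at every time instant at which Algorithm 2 is raising its variables inside the \textbf{while} loop (i.e.\ when the current constraint is not yet satisfied up to a factor $\tfrac12$), Lemma 5.2 bounds the rate of cost increase by $\tfrac32$ while Lemma 5.5 bounds the rate of potential decrease by at least $\tfrac12$. Integrating over the entire execution, the total algorithmic cost accumulated before the final rescaling is at most $3$ times the initial potential, i.e.\ $O(\log k) \cdot \dynamic$.

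Finally, as noted right after the description of Algorithm 2, the algorithm actually declares the solution to the $j$-th constraint by doubling all variables at the end of that step, which raises $\sum_i a_{ij} x_{ij}$ from $\tfrac12$ to $1$ and is legal because every variable is capped at $\tfrac12$ when the \textbf{while} loop exits (so doubling keeps $x_{ij}, y_i \le 1$). This post-processing only inflates the total cost by another factor of $2$, and so the overall cost remains $O(\log k) \cdot \dynamic$, yielding Theorem 5.1.

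I do not anticipate any serious obstacle here, since the heavy lifting has already been done in Lemmas 5.2--5.5; the only subtlety is to verify carefully that Lemma 5.5 correctly accounts for the coupling between $x_{ij}$ and $y_i$ when $x_{ij} = y_i$ (which is already handled inside the lemma via the case analysis on whether $x_{ij} < y_i$) and that the index set $I$ used in its proof correctly excludes coordinates where $x_{ij} > \xd_{ij}$, so that the feasibility of the supported suggestion $s(j)$ can still be invoked exactly as in the proof of Lemma 3.5.
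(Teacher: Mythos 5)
Your proposal is correct and follows essentially the same route as the paper: it combines the cost-rate bound, the non-negativity and initial-value bounds on the potential, and the potential-decrease bound exactly as the paper does, with the final doubling step contributing only a constant factor. No gaps.
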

\begin{proof}
    Initially, let $x_{ij} = y_i = 0$ for all $i, j$ but let $\xd_{ij}, \yd_i$ be their final value. 
    Then, by \Cref{lem:opt-pot1}, the potential $\phi$ is at most $O(\log k)$ times the cost of \dynamic. Now, as \Cref{alg: Dynamic_OPT_fac_loc} increases the values of the variables $x_{ij}, y_i$, it incurs cost at rate at most $\frac32$ (by \Cref{lem:costfac}) and the potential $\phi$ decreases at rate at least $\frac12$ (by \Cref{lem:alg-pot1}). Since $\phi$ is always non-negative (by \Cref{lem:pot-nonneg1}), it follows that the total cost of the algorithm is at most $3$ times the potential $\phi$ at the beginning, i.e., at most $O(\log k)$ times the \dynamic benchmark. 
\end{proof}

We can also robustify the solution produced by \Cref{alg: Dynamic_OPT_fac_loc} using the same ideas as in \Cref{thm:ocp-robust-gen}. Namely, we run \Cref{alg: Dynamic_OPT_fac_loc} to produce one solution and an online algoeithm without prediction to obtain another solution. Then, these two solutions are fed into a meta algorithm running \Cref{alg: Dynamic_OPT_fac_loc} to obtain the final solution. We get the following analog of \Cref{thm:ocp-robust-gen}:

\begin{theorem}
    \label{thm:box-robust-gen}
        Suppose a class of online covering problems with box constraints have an online algorithm (without predictions) whose competitive ratio is $\alpha$. Then, 
        there is an algorithm for this class of online covering problems with box constraint with $k$ suggestions that produces an online solution whose cost is at most $O(\min\{\ln k \cdot \text{\dynamic}, \alpha \cdot \opt\})$.
\end{theorem}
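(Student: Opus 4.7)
The plan is to apply the same meta-algorithm strategy used for \Cref{thm:ocp-robust-gen} in the pure \ocp setting, now instantiated with \Cref{alg: Dynamic_OPT_fac_loc} for box-type constraints. First, I would run \Cref{alg: Dynamic_OPT_fac_loc} on the given instance with the $k$ supplied suggestions; by \Cref{thm:box}, this produces an online solution $\mathcal{S}_1$ of cost at most $O(\log k)\cdot \dynamic$. In parallel, I would run the hypothesized prediction-free online algorithm with competitive ratio $\alpha$ to obtain a second online solution $\mathcal{S}_2$ of cost at most $\alpha\cdot \opt$. Both algorithms process the same stream of constraints and independently maintain feasible monotone solutions.

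Next, I would set up a meta-algorithm that is itself a run of \Cref{alg: Dynamic_OPT_fac_loc} on the same stream of constraints, but with only $k' = 2$ suggestions. In each online step $j$, the two suggestions presented to the meta-algorithm are the current variable settings $(x_{ij}^{(1)}, y_i^{(1)})$ of $\mathcal{S}_1$ and $(x_{ij}^{(2)}, y_i^{(2)})$ of $\mathcal{S}_2$ after these two subroutines have responded to the $j$-th constraint. Each of these is a feasible satisfaction of the new constraint $\sum_i a_{ij} x_{ij} \ge 1$ respecting the box constraints $x_{ij} \le y_i$, since $\mathcal{S}_1$ and $\mathcal{S}_2$ are themselves feasible online solutions. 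As in \eqref{eq:tight}, we may scale each suggestion down to make it tight without affecting the analysis.

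The key observation is that the \dynamic benchmark of the meta-algorithm is at most $\min\{O(\log k)\cdot \dynamic,\ \alpha\cdot \opt\}$. Indeed, by definition, this benchmark is the minimum-cost feasible solution supported by at least one of the two suggestions in every step; the ``always follow $\mathcal{S}_1$'' solution is such a solution and has cost $O(\log k)\cdot \dynamic$, while the ``always follow $\mathcal{S}_2$'' solution is also such a solution and has cost $\alpha\cdot \opt$. Applying \Cref{thm:box} to the meta-algorithm with $k' = 2$, its cost is at most $O(\log 2) = O(1)$ times its \dynamic benchmark, which gives the claimed bound of $O(\min\{\log k\cdot \dynamic,\ \alpha\cdot \opt\})$.

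I do not anticipate serious obstacles, since this argument is a direct analogue of the proof of \Cref{thm:ocp-robust-gen}. The only subtlety worth verifying is that the incremental outputs of $\mathcal{S}_1$ and $\mathcal{S}_2$ genuinely form valid per-step suggestions for the box-constrained meta-algorithm, i.e.\ they remain monotone across steps and satisfy both the covering constraint and the box constraints. This is immediate from the fact that $\mathcal{S}_1$ and $\mathcal{S}_2$ are themselves feasible online algorithms for the same problem, and from the fact that the tightness reduction from \eqref{eq:tight} (carried over to the box setting) can be applied without loss of generality.
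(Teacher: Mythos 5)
Your proposal is correct and follows essentially the same route as the paper: run \Cref{alg: Dynamic_OPT_fac_loc} with the $k$ suggestions and the $\alpha$-competitive prediction-free algorithm as two parallel solutions, then feed their per-step outputs as two suggestions into a meta-run of \Cref{alg: Dynamic_OPT_fac_loc} and invoke \Cref{thm:box} with $k'=2$. Your explicit check that the meta-algorithm's \dynamic benchmark is at most $\min\{O(\log k)\cdot\dynamic,\ \alpha\cdot\opt\}$ is exactly the intended argument.
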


\section{Online Facility Location}\label{sec:facility}
We now apply the online covering with box constraints framework to the online \fac problem. We first describe the offline version of this problem. An instance of the \fac is given by a metric space $d(\cdot, \cdot)$,  a set of $n$ potential facility location points $\cal F$, and a set $\cal R$ of clients in the  metric space. 
Further, each location $f_i \in {\cal F}$ has a facility opening cost $o_i$. 
A solution opens a subset ${\cal F}' \subseteq {\cal F}$ of facilities, and assigns each client $r_j$ to the closest open facility $f_{i_j} \in {\cal F}'$. The connection cost of this client is $d(r_j, f_{i_j})$ and the goal is to minimize the sum of the connection costs of all the clients and the facility opening costs of the facilities in ${\cal F}'.$ 

In the online \fac problem, clients arrive over time and the solution needs to assign each arriving client to an open facility. The algorithm can also open facilities at any time. This problem was first studied by Meyerson~\cite{Meyerson01} who gave a competitive ratio of $O(\log n)$ for $n$ clients. This was later improved (and derandomized) by Fotakis~\cite{Fotakis08} to $O(\frac{\log n}{\log\log n})$. Since then, many variants of the problem have been studied; for a survey of the results, the reader is referred to~\cite{Fotakis11}.

We now describe the natural LP relaxation for \fac; our notion of a fractional solution will be with respect to this LP relaxation. 
We have variables $y_i$ for each facility location $f_i \in {\cal F
}$, which denotes the extent to which facility $f_i$ is open, and variables $x_{ij}$ for each client $r_j$ and facility $f_i$ denoting the extent to which $r_j$ is assigned to $f_i$. The relaxation is as follows:

\begin{align*}
\text{Minimize } \sum_{i=1}^m o_i y_i & + \sum_{i=1}^{n}\sum_{j=1}^{m} d_{ij} x_{ij} \text{ subject to} \nonumber\\
   x_{ij} &\leq y_{i} \quad \forall i, j\\ 
   \sum_{i=1}^{n}x_{ij} &= 1 \quad \forall i\\ 
    y_i &\in [0, 1] \quad \forall j
\end{align*}

Notice that this is a special case of the online covering problem with box constraints, where $c_i=o_i$ for all $i$, and $ a_{ij}=1$ for all $i,j$.  Thus, we can apply~\Cref{alg: Dynamic_OPT_fac_loc} for online \fac problem as well. Moreover, as mentioned earlier, there is an algorithm (due to Fotakis~\cite{Fotakis08}) for the online facility location problem (without predictions) that has a competitive ratio of at most $O(\log n)$ for $n$ clients. As a corollary of~\Cref{thm:box-robust-gen}, we get the following result:

\begin{theorem}
    \label{thm:fracfl}
    There is an algorithm for the fractional online \fac problem that produces an online solution whose cost is at most $O(\min\left\{\frac{\ln n}{\ln \ln n}\cdot \opt, \ln k \cdot \dynamic\right\})$ in the multiple predictions setting with $k$ predictions.
\end{theorem}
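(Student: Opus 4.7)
The plan is to show that this is essentially an immediate corollary of \Cref{thm:box-robust-gen} once we verify that online fractional \fac fits the framework of online covering with box constraints, and then plug in a known competitive algorithm without predictions.

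First, I would confirm the reduction to box-constrained online covering. The LP relaxation for \fac has exactly the structure required by \Cref{sec:box}: the offline cost coefficients are $c_i := o_i$ (facility opening costs); the online cost coefficient $d_{ij}$ is the distance $d(r_j, f_i)$, revealed when client $r_j$ arrives; and the covering constraint is $\sum_i a_{ij} x_{ij} \ge 1$ with all $a_{ij} = 1$. The box constraints $x_{ij} \le y_i$ are indeed ``offline'' in the sense that when client $r_j$ appears, only the new coupling row $x_{ij} \le y_i$ (for the fresh index $j$) is introduced, but this is a pure box constraint without any online coefficient, matching the setup of \Cref{sec:box}. A suggestion for serving client $r_j$ is exactly a fractional assignment $\{x_{ij}(s)\}_i$ summing to $1$, and the corresponding $y_i(s)$ required by the box constraint is implicit.

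Next, I would invoke \Cref{thm:box-robust-gen} with $\alpha$ equal to the competitive ratio of the best known online \fac algorithm without predictions. Fotakis's result~\cite{Fotakis08} gives an $O(\log n/\log \log n)$-competitive deterministic online algorithm for \fac on $n$ clients, and the same bound holds for fractional \fac (since an integral solution is a feasible fractional solution). Setting $\alpha = O(\log n/\log \log n)$ in \Cref{thm:box-robust-gen} then yields the claimed bound of $O(\min\{\ln k\cdot \dynamic, \frac{\ln n}{\ln \ln n}\cdot \opt\})$.

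There is essentially no technical obstacle here beyond bookkeeping: the only thing to check carefully is that the online facility location problem genuinely satisfies the hypotheses of \Cref{sec:box}, in particular that (i) the $c_i$ coefficients are known offline (they are, since facility opening costs are part of the instance), (ii) each arriving client contributes only one covering constraint and a collection of box constraints whose coefficients are known upfront, and (iii) each supplied suggestion is a feasible fractional assignment of the current client. Once these are verified, both the $O(\log k)\cdot \dynamic$ guarantee (from \Cref{thm:box}) and the robustification against an $\alpha$-competitive online algorithm (from \Cref{thm:box-robust-gen}) apply verbatim, completing the proof.
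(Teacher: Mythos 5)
Your proposal is correct and follows essentially the same route as the paper: observe that the \fac LP relaxation is a special case of online covering with box constraints (with $c_i = o_i$, $a_{ij}=1$, and $d_{ij}$ the client--facility distance revealed online), and then invoke \Cref{thm:box-robust-gen} with $\alpha$ given by Fotakis's $O(\log n/\log\log n)$-competitive online algorithm, whose guarantee carries over to the fractional relaxation. Nothing further is needed.
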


We also note that the $O(\log k)$ bound in \Cref{thm:fracfl} cannot be improved further (except lower order terms). This is because we can simulate an instance of the online facility location problem without predictions by giving all prior client locations as suggested facility locations for $k=n$. Furthermore, a lower bound of $\Omega\left(\frac{\log n}{\log \log n}\right)$ is also known (due to Fotakis~\cite{Fotakis08}) for online facility location, and this lower bound construction easily extends to the fractional version of the problem.

\section{Future Directions}\label{sec:future}This paper presented a general recipe for the design of online algorithms with multiple machine-learned predictions. This general technique was employed to obtain tight results for classical online problems such as set cover, (weighted) caching, and facility location. We believe this framework can be applied to other online covering problems as well. In particular, it would be interesting to consider the $k$-server problem with multiple suggestions in each step specifying the server that should serve the new request. It would also be interesting to extend this framework to some packing problems such as online budgeted allocation, and to more general settings for mixed packing and covering linear programs and non-linear (convex) objectives. From a technical standpoint, our potential-based analysis introduces several new ideas that can be useful even in the competitive analysis of classical online algorithms.

{\small
\bibliographystyle{alpha}
\bibliography{ref}
}

\end{document}
